\newtheorem{theorem}{Theorem}
\newtheorem{lemma}[theorem]{Lemma}
\newtheorem{definition}{Definition}
\definecolor{myBlue}{RGB}{66, 148, 247}
\definecolor{myRed}{RGB}{209, 53, 43}
\theoremstyle{plain}
\newcommand{\BibTeX}{B\kern-.05em{\sc i\kern-.025em b}\kern-.08em\TeX}
\begin{document}


\begin{frontmatter}


\paperid{8744} 


\title{Machine Unlearning for Streaming Forgetting}


\author[A]{\fnms{Shaofei}~\snm{Shen}}
\author[A]{\fnms{Chenhao}~\snm{Zhang}}
\author[A]{\fnms{Yawen}~\snm{Zhao}} 
\author[A]{\fnms{Alina}~\snm{Bialkowski}} 
\author[B]{\fnms{Weitong}~\snm{Chen}} 
\author[A]{\fnms{Miao}~\snm{Xu}\thanks{Corresponding author}} 

\address[A]{The University of Queensland} \address[B]{University of Adelaide}


\begin{abstract}
Machine unlearning aims to remove knowledge of the specific training data in a well-trained model. Currently, machine unlearning methods typically handle all forgetting data in a single batch, removing the corresponding knowledge all at once upon request. However, in practical scenarios, requests for data removal often arise in a streaming manner rather than in a single batch, leading to reduced efficiency and effectiveness in existing methods. Such challenges of streaming forgetting have not been the focus of much research. In this paper, to address the challenges of performance maintenance, efficiency, and data access brought about by streaming unlearning requests, we introduce a streaming unlearning paradigm, formalizing the unlearning as a distribution shift problem. We then estimate the altered distribution and propose a novel streaming unlearning algorithm to achieve efficient streaming forgetting without requiring access to the original training data. Theoretical analyses confirm an $O(\sqrt{T} + V_T)$ error bound on the streaming unlearning regret, where $V_T$ represents the cumulative total variation in the optimal solution over $T$ learning rounds. This theoretical guarantee is achieved under mild conditions without the strong restriction of convex loss function. Experiments across various models and datasets validate the performance of our proposed method. 
\end{abstract}

\end{frontmatter}


\section{Introduction}

Machine unlearning aims at safeguarding the privacy of individuals concerning sensitive and private data~\cite{CCPA,GDPR}. The objective of machine unlearning is to remove information associated with a selected group of data, referred to as \emph{forgetting data}, from a well-trained model while retaining the knowledge encapsulated in the \emph{remaining data}. Presently, research in this field has made progress in designing effective unlearning algorithms. Current unlearning methods typically consider the forgetting data as a single batch and approach unlearning as a singular adjustment process. This process removes all information from the forgetting data at once, then uses the remaining data to repair and update the model, preserving its functionality~\cite{boundary,scrub}.

In practical scenarios, data removal requests from sensitive information owners are usually made incrementally rather than in predetermined batches. For example, social media users might request the deletion of recommendations learned from their personal browsing history at any time, resulting in a continuous stream of individual requests. This streaming nature means that requests are submitted immediately as users identify their needs and often arise in a streaming manner rather than being grouped and processed together. To address the streaming forgetting problem, where removal requests occur incrementally or in streams, existing batch unlearning approaches often handle each request from scratch, one by one. Therefore, the problems with batch unlearning are further amplified in multiple rounds of unlearning in streaming settings.

The first issue is the accumulated performance drop. Although the performance degradation usually happens in machine unlearning, previous methods have tried to reduce the degradation in the batch unlearning~\cite {sisa,boundary,ts,Amnesiac,laf,unroll,acml1}. However, this degradation can accumulate across multiple rounds, leading to a significant decline in overall model performance over time. The second issue is the low efficiency, which stems from the repeated performance repairs on the remaining data~\cite {sisa,boundary,ts,laf}. Due to the high overlap of remaining data across different rounds, there are significant time and computational costs associated with reprocessing the same data, making the unlearning process inefficient. Thirdly, frequent access to the remaining data can be problematic due to data regularization policies~\cite{CCPA,GDPR}. In many cases, parts of the training data may no longer be accessible or may be subject to strict access controls~\cite{boundary,zero-shot,RememberWhatYouWanttoForget,aaai1,geniu}, hindering the necessary updates and repairs to the model. 

These limitations emphasize the need for a novel streaming forgetting method that can handle long sequences of data removal requests without performance degradation~\cite{ada,camu}. It should optimize time and memory consumption~\cite{arxivsurvey,fastyet} with minimal reliance on the training data. Although some prior works have explored streaming forgetting in the context of ensemble models~\cite{continualunlearning}, such an approach predominantly focuses on the forgetting of the candidate model rather than the forgetting of training data. Only two studies have directly tackled the problem of streaming data forgetting~\cite{onlinelr,onlineforgetvision}, both of which impose specific constraints on model structures to realize unlearning, like vision transformer~\cite{onlineforgetvision} or linear model~\cite{onlinelr}. As a result, the problem of streaming unlearning for data instances from general well-trained models remains an open research challenge.

In this paper, we propose a novel streaming unlearning method that addresses the accumulated drops in both effectiveness and efficiency while reducing the need for frequent access to training data. To estimate the unlearning risk without training data, we formalize unlearning as a distribution shift problem. The shifted distribution caused by removing forgetting data serves as prior knowledge to make the unlearning process more efficient and accurate. To incrementally update the model towards unlearning, we propose a risk estimator to achieve the optimal model in each streaming round and propose the corresponding streaming unlearning approach -- SAFE (\textbf{S}tream-\textbf{A}ware \textbf{F}org\textbf{e}tting). Our approach departs from traditional batch unlearning by incorporating dynamic regret risk and reducing reliance on original training data. Furthermore, our theoretical analysis guarantees the effectiveness of SAFE by providing the upper bound on the unlearning regret risk of $O(\sqrt{T} + V_T)$, where $V_T$ represents the cumulative total variation in the optimal solution over $T$ learning rounds. This result holds without assuming the convexity of the loss function. To evaluate the practical performance of SAFE, we conduct empirical experiments on different deep neural networks across various datasets.

The contributions of this paper can be summarized as follows:
\vspace{-1mm}
\begin{itemize}
\setlength\itemsep{-0mm}
    \item We introduce the streaming unlearning paradigm and the SAFE algorithm to address the streaming unlearning problem. SAFE maintains high predictive performance on the remaining data while ensuring high unlearning efficiency. Notably, it does not require repeated access to the original training data during unlearning.
    \item We are the first to provide an $O(\sqrt{T}+ V_T)$ upper bound on the unlearning regret risk of the proposed algorithms through theoretical analysis, which is much lower than the error of previous methods.
    \item Through empirical evaluations across multiple datasets and models, we demonstrate that SAFE achieves higher or comparable performance more efficiently than other batch and streaming unlearning methods, especially on neural network-based models. 
\end{itemize}

\section{Background and Problem Setup}\label{sec:prelibg}

In this section, we first introduce the standard machine unlearning problem, including both exact and approximate formulations. Building upon these foundations, we then formulate the problem in the streaming unlearning setting.

\subsection{Background}\label{sec:preli}

\paragraph{Exact Unlearning.}
We consider a supervised learning setting where a model is trained on a dataset $D$ consisting of input-label pairs $(\mathbf{x}, y)$, where $\mathbf{x} \in \mathcal{X}$ represents the input features and $y \in \mathcal{Y}$ represents the corresponding labels. 
Let $\mathsf{A}$ be a learning algorithm that outputs a model $f(\cdot; w_0)$ given the data $D$, i.e., $f(\cdot; w_0) = \mathsf{A}(D)$, where $w_0$ denotes the model parameters and $f$ belongs to a hypothesis space $\mathcal{H}$. 
Suppose that a subset $F \subset D$ is later identified as data that must be forgotten. 
The goal of \textit{machine unlearning} is to remove the influence of $F$ from the trained model. Classical \textit{Exact unlearning} retrains the model from scratch on the remaining data $D \setminus F$ (i.e., $D$ excluding $F$), yielding the unlearned model $f(\cdot; w^*)$:
\begin{align}\label{eq:w_estar}
w^* = \arg\min_w \mathcal{L}(D \setminus F, w),
\end{align}
where $\mathcal{L}$ is the task-specific training loss. Specifically, $\mathcal{L}(D \setminus F, w) = \sum_{(\mathbf{x}, y) \in D \setminus F} \ell(f(\mathbf{x}; w), y)$, where $\ell(\cdot, \cdot)$ denotes the prediction loss function (e.g., cross-entropy for classification).

\paragraph{Approximate Unlearning.}
Exact unlearning requires retraining from scratch on the remaining data $D \setminus F$, which is often computationally prohibitive or infeasible in practice~\cite{sisa,ada}. 
To address this, approximate unlearning seeks to efficiently produce a model whose behavior closely approximates that of a retrained model without $F$. Ideally, this corresponds to finding model parameters $w$ that minimize the behavioral difference between the model predictions $f(\cdot; w)$ and those of the retrained model $f(\cdot; w^*)$ across the entire dataset $D$:
\begin{align}\label{eq:unlearning-whole-obj}
\arg\min_w \;  \sum_{(\mathbf{x}, y) \in D} \mathrm{diff}( f(\mathbf{x}; w), f(\mathbf{x}; w^*) ),
\end{align}
where $\mathrm{diff}(\cdot, \cdot)$ denotes a suitable discrepancy measure between model predictions. Unlike exact unlearning, this optimization seeks to approximate the behavior of $f(\cdot; w^*)$ without retraining the model from scratch. In practice, this objective is commonly decomposed into two components that separately address retention and forgetting, balanced by a trade-off parameter $\lambda$~\cite{ts,salun}:
\begin{align}\nonumber
\widetilde{w} = \arg\min_w \Bigg(
& \sum_{(\mathbf{x}, y) \in D \setminus F} \mathrm{diff}\big(f(\mathbf{x}; w), f(\mathbf{x}; w^*)\big) \\\label{eq:unlearningobj}
&+ \lambda \sum_{(\mathbf{x}, y) \in F} \mathrm{diff}\big(f(\mathbf{x}; w), f(\mathbf{x}; w^*)\big)
\Bigg).
\end{align}Here, $\widetilde{w}$ is the optimal model parameters of approximate unlearning.

\subsection{Streaming Unlearning Setup}
The discussions so far have assumed a \emph{batch unlearning} setting, where the forgetting data $F$ is specified once and the dataset $D$ remains accessible during unlearning. However, in many practical scenarios, deletion requests arrive sequentially, and storing or accessing the original training dataset is impractical or prohibited. This motivates the development of \emph{streaming unlearning}, which aims to efficiently update the model as new deletion requests are continuously received.

In streaming unlearning, a training dataset $D$ is used to train the initial model $f(\cdot; w_0)$. 
After learning $w_0$, a sequence of unlearning requests $\{F_t\}_{t=1}^T$ is received, where $F_t$ contains the data to be forgotten at step $t$. 
Let the remaining data at step $t$ be $D_t = D_{t-1} \setminus F_t$. 
The exact unlearning objective at step $t$ is then:
\begin{align}\label{eq:w_star}
w_t^* = \arg\min_w \mathcal{L}(D_t, w).
\end{align}
Without loss of generality, we assume $F_{t} \subseteq D_{t-1}$ at each step. If not, any data points in $F_{t}$ that do not belong to $D_{t-1}$ are ignored.

Although exact unlearning requires retraining from scratch at each step to obtain $w_t^*$, 
this is computationally impractical, especially in streaming settings where unlearning requests arrive continuously.
Therefore, similar to the batch unlearning case, we seek to approximate the behavior of $w_t^*$ by minimizing the discrepancy between the current model $f(\cdot; w)$ and $f(\cdot; w_t^*)$. At step $t$, this leads to the following objective:
\begin{align}\nonumber
\arg\min_{w} \Bigg(
& \sum_{(\mathbf{x}, y) \in {D}_t} \mathrm{diff}\big(f(\mathbf{x}; w), f(\mathbf{x}; w_t^*)\big) \\\label{eq:onlineuobjround}
&+ \lambda  \sum_{(\mathbf{x}, y) \in \bigcup_{i=1}^t F_i} \mathrm{diff}\big(f(\mathbf{x}; w), f(\mathbf{x}; w_t^*)\big)
\Bigg).
\end{align}
Here, the forgetting term accumulates over all previously deleted data points up to step $t$, i.e., $\bigcup_{i=1}^t F_i$.
Over all $T$ steps, the cumulative objective is to minimize:
\begin{align}\nonumber
\mathbf{Obj}_{\text{stream}} = \sum_{t=1}^{T} \Bigg(
& \sum_{(\mathbf{x}, y) \in {D}_t} \mathrm{diff}\big(f(\mathbf{x}; w), f(\mathbf{x}; w_t^*)\big) \\\label{eq:onlineuobj}
&+ \lambda  \sum_{(\mathbf{x}, y) \in \bigcup_{i=1}^t F_i}  \mathrm{diff}\big(f(\mathbf{x}; w), f(\mathbf{x}; w_t^*)\big)
\Bigg).
\end{align}
\paragraph{Relationship to Online Learning.}  
Streaming unlearning shares key characteristics with \emph{online learning}~\cite{onlinelds3,onlinelds2,onlinelds1}, notably the sequential arrival of data. The main distinction lies in the data dynamics: online learning progressively augments the training set, whereas streaming unlearning sequentially removes data points. Despite this fundamental difference, the concept of \emph{regret}—which quantifies the cumulative performance gap between sequentially updated models and the optimal models at each step—remains relevant. This relevance is reflected in the streaming unlearning objective $\mathbf{Obj}_{\text{stream}}$ (Eq.~\ref{eq:onlineuobj}), which mirrors dynamic regret by accumulating, over time, the discrepancies between approximate models $w$ and the per-step optimal models $w_t^*$.

\section{SAFE: Streaming Unlearning Methodology}\label{sec:method}

In this section, we develop a systematic methodology for streaming unlearning. We first propose a risk estimator to quantify the performance of approximate unlearning solutions (Sec.~\ref{riskestimator}), and explain how it can be incrementally estimated throughout the streaming process (Sec.~\ref{distributionshift}). We then give the theoretical analyses establishing its performance guarantees (Sec.~\ref{theoretical}).

\subsection{Streaming Unlearning Risk Estimator}\label{riskestimator}

To systematically evaluate the performance of streaming unlearning, we first formalize a per-round risk estimator. This risk function will serve as a more concrete objective for algorithm design and theoretical analysis in subsequent subsections. As defined in Eq.~\ref{eq:onlineuobjround}, the per-round objective minimizes discrepancy terms that quantify how well the current model preserves retention and enforces forgetting relative to the ideal unlearned model. We now specify their concrete forms. 

For the remaining data $D_t$, we define:
\begin{align}
&\sum_{(\mathbf{x}, y) \in {D}_t} \mathrm{diff}\big(f(\mathbf{x}; w), f(\mathbf{x}; w^*)\big) \quad\notag \\\label{eq:diffDt}
&=  \frac{1}{|D_t|}\sum_{(\mathbf{x}, y) \in D_t} \ell(f(\mathbf{x}; w), y) 
 -  \frac{1}{|D_t|}\sum_{(\mathbf{x}, y) \in D_t} \ell(f(\mathbf{x}; w_t^*), y), 
\end{align}
which quantifies the difference between the task-specific losses $\ell$ of the current model $w$ and the ideal unlearned model $w_t^*$ over the remaining data $D_t$.

For the forgetting data $\bigcup_{i=1}^t F_i$, we define:
\begin{align}
&\sum_{(\mathbf{x}, y) \in \bigcup_{i=1}^t F_i} \mathrm{diff}\big(f(\mathbf{x}; w), f(\mathbf{x}; w^*)\big)\quad\quad\quad\quad \notag \\
&\quad\quad = \frac{1}{\sum_{i=1}^t |F_i|}  \sum_{(\mathbf{x}, y) \in {\bigcup_{i=1}^t F_i}} d_{\mathrm{KL}}(f(\mathbf{x}; w), f(\mathbf{x}; w_t^*)), \label{eq:diffFt}
\end{align}
where $d_{\mathrm{KL}}$ denotes the Kullback-Leibler divergence between the predictive distributions under parameters $w$ and $w_t^*$.

The discrepancy terms in Eq.~\ref{eq:diffDt} and Eq.~\ref{eq:diffFt} are defined differently to reflect the distinct roles of $D_t$ and $F_t$. For $D_t$, the objective is to preserve task performance, naturally measured by the prediction loss difference~\cite{salun,camu}. For $F_t$, since the goal is to enforce forgetting and label-based losses may no longer be meaningful or accessible, we assess the divergence between the model’s predictive distributions~\cite{ts,camu}. Based on these definitions, we express the per-round objective as
\begin{align}
&\quad\arg\min_{w}
 \frac{1}{|D_t|} \sum_{(\mathbf{x}, y) \in D_t} \left(\ell(f(\mathbf{x}; w), y) - \ell(f(\mathbf{x}; w_t^*), y)\right)\notag \\
&\quad\quad +\frac{\lambda}{\sum_{i=1}^t |F_i|}  \sum_{(\mathbf{x}, y) \in {\bigcup_{i=1}^t F_i}}  d_{\mathrm{KL}}(f(\mathbf{x}; w), f(\mathbf{x}; w_t^*))  \notag\\
=&\quad\arg\min_{w_t} \frac{1}{|D_t|} \sum_{(\mathbf{x}, y) \in D_t} \ell(f(\mathbf{x}; w), y) \notag\\
&\quad\quad +\frac{\lambda}{\sum_{i=1}^t |F_i|}  \sum_{(\mathbf{x}, y) \in {\bigcup_{i=1}^t F_i}}  d_{\mathrm{KL}}(f(\mathbf{x}; w), f(\mathbf{x}; w_t^*)) .
\label{eq:argmin_equal}
\end{align}
The equality holds because $\ell(f(\mathbf{x}; w_t^*)$ does not depend on $w$ and thus does not affect the minimization. Thus we define the risk function as
\begin{align}
R_t(w) 
&= \frac{1}{|D_t|} \sum_{(\mathbf{x}, y) \in D_t} \ell(f(\mathbf{x}; w), y) \notag \\
& +  \frac{\lambda}{\sum_{i=1}^t |F_i|} \sum_{i=1}^t\sum_{(\mathbf{x}, y) \in F_i} d_{\mathrm{KL}}(f(\mathbf{x}; w), f(\mathbf{x}; w_t^*)). \label{eq:Rt}
\end{align}

To better understand the temporal evolution of the risk, we express $R_t(w)$ as a cumulative function of the initial risk $R_0(w)$ and the accumulated losses over $F_t$. Since the remaining data evolves as $D_t = D_{t-1} \setminus F_t$, the cumulative loss over $D_t$ can be written as:
\begin{align}
\frac{1}{|D_t|} \sum_{(\mathbf{x}, y) \in D_t} \ell(f(\mathbf{x}; w), y) 
&= \frac{1}{|D_t|} \sum_{(\mathbf{x}, y) \in D_0} \ell(f(\mathbf{x}; w), y) \notag \\
- & \frac{1}{|D_t|} \sum_{i=1}^t \sum_{(\mathbf{x}, y) \in F_i} \ell(f(\mathbf{x}; w), y). \label{eq:Dtdecompose}
\end{align}
Also note that $F_0 = \emptyset$ and 
\begin{align}
R_0(w) 
&= \frac{1}{|D_0|} \sum_{(\mathbf{x}, y) \in D_0} \ell(f(\mathbf{x}; w), y) \label{eq:R0}. 
\end{align}
Replacing Eqs.~\ref{eq:Dtdecompose} and ~\ref{eq:R0} into Eq.~\ref{eq:Rt} leads to:
\begin{align}\nonumber
R_t(w) = &\underbrace{\frac{|D_{0}|}{|D_{t}|}R_{0}(w) - \frac{1}{|D_{t}|}\sum_{i=1}^{t}\sum_{(\mathbf{x},y)\in F_{i}}\ell(f(\mathbf{x}; w),y)}_{(a)  \text{ Retention  Term}} \\
&\underbrace{+ \frac{\lambda}{\sum_{i=1}^t |F_i|} \sum_{i=1}^{t}\sum_{(\mathbf{x},y)\in F_{i}}d_{\mathrm{KL}}(f(\mathbf{x};w), f(\mathbf{x};w^*_{t}))}_{(b) \text{ Forgetting  Term}}.\label{eq:trisk}
\end{align}

The risk estimator in Eq.~\ref{eq:Rt} provides a principled objective that balances preserving task performance on remaining data and promoting forgetting on deleted data. However, the main challenge lies not in formulating the risk itself, but in optimizing it efficiently without storing the original dataset $D_0$, as retaining and repeatedly computing over such a large dataset would be impractical in streaming scenarios. 

In practice, streaming unlearning tasks often involve small deletion requests (i.e., $|F_i|$ is relatively small), while imposing strict constraints on real-time responsiveness and memory efficiency. In the following sections, we develop an optimization approach that incrementally updates model parameters and risk estimates without needing access to the original training set.

\subsection{Incremental Risk Optimization}\label{distributionshift}

\subsubsection{Optimizing the Retention Term}\label{sec:opt_retent}
For ease of presentation, we rewrite the retention term in Eq.~\ref{eq:trisk} here
\begin{align}
R_t^{\text{ret}}(w) = \frac{|D_{0}|}{|D_{t}|}R_{0}(w) - \frac{1}{|D_{t}|}\sum_{i=1}^{t}\sum_{(\mathbf{x},y)\in F_{i}}\ell(f(\mathbf{x}; w),y). \label{eq:Rret}
\end{align}
Optimizing the retention term in Eq.~\ref{eq:Rret} efficiently in a streaming unlearning setting poses significant challenges. At each deletion step, calculating the retention term’s gradient and updating the model require efficient strategies that avoid recomputing over large datasets or storing excessive historical data. Regarding the second component in Eq.~\ref{eq:Rret}, calculating it requires using the deletion sets $\{F_1, \dots, F_t\}$. Since each $F_i$ and their cumulative size remain relatively small, storage costs could be manageable. Nonetheless, to ensure computational efficiency, it is better to incorporate their contribution through a recursive update mechanism. For the first component in Eq.~\ref{eq:Rret}, evaluate it and its gradient request a calculation over the entire initial dataset $D_0$. In real-world learning tasks, $D_0$ is typically large. Storing or repeatedly accessing it is not feasible. Beyond the storage burden, recomputing gradients over such a large dataset at each round would cause significant latency, especially given the high frequency of deletion requests in streaming unlearning.

To overcome these challenges, we adopt an incremental gradient update strategy. Since $w_0$ was already trained to perform well on the remaining data, each optimization step begins from $w_0$ to maintain performance while improving efficiency. At each time step, we perform a single gradient descent update starting from $w_0$, following a one-step update strategy commonly adopted in online learning~\cite{nonstaionary,ons}. While limiting updates to a single step may introduce some performance trade-offs, Sec.~\ref{theoretical} later shows that, with a properly chosen learning rate, this approach achieves satisfactory results with theoretical guarantees. This update strategy significantly reduces both storage requirements and computational complexity, supporting real-time updates in streaming unlearning without compromising retention performance.

Specifically, the incremental gradient update proceeds as follows. Before the streaming unlearning starts, We compute $\nabla_{w_0} R_0(w_0)$ once using the training dataset $D_0$. $D_0$ is not used anymore in later updates. As new deletion requests arrive, we compute only the gradient of the newly deleted data $F_t$ with respect to $w_0$. The retention term’s gradient is updated recursively:
\begin{align}
\nabla_{w_0} R_t^{\text{ret}}(w_0) 
&= \frac{|D_{t-1}|}{|D_t|} \nabla_{w_0} R_{t-1}^{\text{ret}}(w_0) \notag \\
&\quad - \frac{1}{|D_t|} \sum_{(\mathbf{x}, y) \in F_t} \nabla_{w_0} \ell(f(\mathbf{x}; w_0), y). \label{eq:ret_recursive}
\end{align}
In this recursive update, $|D_{t-1}|$, which can be updated incrementally as $|D_t| = |D_{t-1}| - |F_t|$ at each step. For each new deletion request, it only needs to compute the gradient of the newly removed data $F_t$ with respect to $w_0$. Since $|F_t|$ is typically small, the additional computational cost per round remains low. Thus, the proposed recursive update mechanism ensures constant storage and per-step computation that scales with $|F_t|$.

\subsubsection{Optimizing the Forgetting Term}

\paragraph{Forgetting term \& challenge} For clarity, we restate the forgetting term from Eq.~\ref{eq:trisk} as:
\begin{align}
R_t^{\text{fg}}(w) = \frac{\lambda}{\sum_{i=1}^t |F_i|} \sum_{i=1}^t \sum_{(\mathbf{x}, y) \in F_i} d_{\mathrm{KL}}(f(\mathbf{x}; w), f(\mathbf{x}; w_t^*)). \label{eq:Rforg}
\end{align}

Unlike the retention term in Eq.~\ref{eq:Rret}, this component does not depend on the original dataset $D_0$. Instead, the main challenge arises from the second argument of the KL divergence, $f(\mathbf{x}; w_t^*)$, which represents the predictions of the ideal unlearned model retrained on the remaining data $D_t$ (as defined in Eq.~\ref{eq:w_star}). However, $w_t^*$ is the very objective that the unlearning process seeks to approximate and inherently unavailable at this stage.

To address this, we note that, when ideally trained, $f(\mathbf{x}; w_t^*) \approx p_t(y|\mathbf{x})$, where $p_t(y|\mathbf{x})$ denotes the conditional probability estimated from the remaining data $D_t$. Similarly, $f(\mathbf{x}; w_0) \approx p_0(y|\mathbf{x})$, where $p_0(y|\mathbf{x})$ is estimated from the original dataset $D_0$. We therefore propose to construct a relationship between $p_0(y|\mathbf{x})$ and $p_t(y|\mathbf{x})$, enabling us to approximate $f(\mathbf{x}; w_t^*)$ on $F_t$ using $f(\mathbf{x}; w_0)$, despite the inaccessibility of $w_t^*$. To further ensure the method’s practicality in streaming scenarios, the approximation should also support recursive updates with minimal storage and computation as new deletion requests arrive.

\paragraph{Posterior shift approximation} We then employ Bayes' theorem to relate $p_0(y|\mathbf{x})$ and $p_t(y|\mathbf{x})$:
\begin{align}\label{eq:ds}
    p_t(y|\mathbf{x}) 
    &= \frac{p_0(\mathbf{x})}{p_t(\mathbf{x})}\frac{p_t(y)}{p_0(y)}\frac{p_t(\mathbf{x}|y)}{p_0(\mathbf{x}|y)}p_0(y|\mathbf{x}). 
\end{align}
Motivated by density ratio modeling approaches commonly used in domain adaptation and distribution shift research~\cite{quinonero2008covariate,sugiyama2007direct}, we approximate the posterior shift using only the label and conditional distribution ratios, while marginal feature shifts are either assumed negligible or absorbed into the proportional constant. That is,
\begin{align}\label{eq:ratio}
p_t(y|\mathbf{x}) 
&\propto \frac{p_t(y)}{p_0(y)} \cdot \frac{p_t(\mathbf{x}|y)}{p_0(\mathbf{x}|y)}\cdot p_0(y|\mathbf{x}). 
\end{align}
If the density ratios $\frac{p_t(y)}{p_0(y)}$ and $\frac{p_t(\mathbf{x}|y)}{p_0(\mathbf{x}|y)}$ can be estimated, the initial model $f(\mathbf{x}; w_0)$ can be adjusted (up to normalization) to approximate the target model $f(\mathbf{x}; w_t^*)$, i.e., 
\begin{align}\label{eq:model_adjust}
    f(\mathbf{x}; w_t^*) \approx \frac{p_t(y)}{p_0(y)} \cdot \frac{p_t(\mathbf{x}|y)}{p_0(\mathbf{x}|y)} \cdot f(\mathbf{x}; w_0) \cdot \text{(normalization)}.
\end{align}

The first term, $\frac{p_t(y)}{p_0(y)}$, reflects the change in the marginal label distribution between $D_0$ and $D_t$. Specifically, it corresponds to the ratio of the proportions of data belonging to class $y$ in the two datasets:
\begin{align}\label{eq:label_ratio}
    \frac{p_t(y)}{p_0(y)} \approx \frac{n_t(y) / |D_t|}{n_0(y) / |D_0|} = \frac{n_t(y)}{n_0(y)} \cdot \frac{|D_0|}{|D_t|},
\end{align}
where $n_t(y)$ and $n_0(y)$ denote the number of samples with label $y$ in $D_t$ and $D_0$, respectively, and $|D_t|$ and $|D_0|$ are the total sample sizes of the two datasets.

\paragraph{Incremental estimation of class-conditional distributions} The second term, $\frac{p_t(\mathbf{x}|y)}{p_0(\mathbf{x}|y)}$, is generally more challenging to estimate directly. To make this estimation tractable and adaptable to incremental changes, we propose approximating $p_t(\mathbf{x}|y)$ and $p_0(\mathbf{x}|y)$ using a parametric distribution whose parameters can be incrementally updated as data evolves.

Previous research has shown that Gaussian distributions possess this desirable property. In particular, the incremental estimation of Gaussian distributions~\cite{DBLP:conf/colt/DasguptaH07,finch2009incremental} allows the parameters of the updated distribution to be derived from the previous distribution and newly arriving data. Although our setting involves data deletion rather than addition, a similar incremental update mechanism can apply if we can model $p_t(\mathbf{x}|y)$ and $p_0(\mathbf{x}|y)$ as Gaussian distributions.

However, as raw data distributions are rarely Gaussian, we follow the common practice in variational autoencoders (VAE)~\cite{vae} and Bayesian Neural Networks~\cite{neal2012bayesian} of projecting data into a latent space that approximates a Gaussian distribution. 

Through experiments, we found that the projected data $\mathbf{z}$ approximately follows a Gaussian distribution despite data removal\footnote{The Gaussianity of the standardized projected features $\mathbf{z}$ is validated using Mardia's multivariate normality test~\cite{mardia1970measures}. As shown in Appendix~B.6 Table~6, the resulting $p$-values for both skewness and kurtosis are consistently high across all evaluated scenarios, indicating that the null hypothesis of multivariate normality cannot be rejected.}. This observation aligns with prior findings that low-dimensional random linear combinations of data tend to approach maximal entropy (Gaussian) distributions~\cite{friedman1974projection}. Specifically, we first generate a random projection matrix $\mathbf{V}$, where each entry is independently sampled from a standard Gaussian distribution. At each step $t$, we project $\mathbf{x}$ using $\mathbf{V}$ and then standardize the projected data across the entire dataset $D_t$. This yields the transformed features $\mathbf{z}$, which is
\begin{align}\label{eq:standardize}
    \mathbf{z} = \boldsymbol{\Sigma}^{-1/2} \left( \mathbf{V}^\top \mathbf{x} - \boldsymbol{\mu} \right)
\end{align}
where $\boldsymbol{\mu}$ and $\boldsymbol{\Sigma}$ represent the empirical mean and covariance matrix of the projected features $\mathbf{V}^\top \mathbf{x}$ for data points in $D_t$ with label $y$ at step $t$. For clarity, we denote these as $\boldsymbol{\mu}_t^{(y)}$ and $\boldsymbol{\Sigma}_t^{(y)}$ in the following discussions to reflect their dependence on both the current dataset $D_t$ and the class $y$.

Since data deletion requests may lead to changes in the class-conditional distribution over time, it is essential to update $\boldsymbol{\mu}_t^{(y)}$ and $\boldsymbol{\Sigma}_t^{(y)}$ sequentially rather than recomputing them from scratch at each step. Let $n_t(y)$ denote the number of data points in $D_t$ with label $y$. So the number of data points with label $y$ that were removed between step $t-1$ and step $t$ is $n_t(y)-n_{t-1}(y)$. We also define $\boldsymbol{\widetilde{\mu}}_t^{(y)}$ and $\boldsymbol{\widetilde{\Sigma}}_t^{(y)}$
 as the mean vector and covariance matrix of the data points in $F_t$ with label $y$. Then the updated mean can be recurrently computed as:
\begin{align}\label{eq:mu_update}
    \boldsymbol{\mu}_t^{(y)} = \frac{n_{t-1}(y) \cdot \boldsymbol{\mu}_{t-1}^{(y)} - \left( n_{t-1}(y) - n_t(y) \right) \cdot \boldsymbol{\widetilde{\mu}}_{t}^{(y)}}{n_t(y)}, 
\end{align}
and the covariance matrix can be recurrently updated as:
\begin{align}\label{eq:sigma_update}
    \boldsymbol{\Sigma}_t^{(y)} = \frac{n_t(y) - 1}{n_{t-1}(y) - 1} \cdot \boldsymbol{\Sigma}_{t-1}^{(y)} - \boldsymbol{\widetilde{\Sigma}}_{t}^{(y)} + c\left(\boldsymbol{\mu}_t^{(y)}, \boldsymbol{\mu}_{t-1}^{(y)}, \boldsymbol{\widetilde{\mu}}_{t}^{(y)} \right),
\end{align}where the correction term $c(\cdot)$ accounts for the change in the mean and is defined as:
\[
\begin{aligned}
    c\big( \boldsymbol{\mu}_t^{(y)}, \boldsymbol{\mu}_{t-1}^{(y)}, \boldsymbol{\widetilde{\mu}}_{t}^{(y)} \big) 
    &= \frac{n_{t-1}(y) \cdot \big( n_{t-1}(y) - n_t(y) \big)}{n_t(y) \cdot \big( n_{t-1}(y) - 1 \big)} \\
    &\quad \cdot \big( \boldsymbol{\mu}_{t-1}^{(y)} - \boldsymbol{\widetilde{\mu}}_{t}^{(y)} \big) \big( \boldsymbol{\mu}_{t-1}^{(y)} - \boldsymbol{\widetilde{\mu}}_{t}^{(y)} \big)^\top.
\end{aligned}
\]
Eqs~\ref{eq:mu_update} and \ref{eq:sigma_update} follow directly from the definitions of the mean and covariance with data removal, and we omit the detailed derivation.

Since $p_t(\mathbf{x}|y)$ and $p_0(\mathbf{x}|y)$ are both Gaussian distributions, their density ratio can be computed in closed form. Combining this with Eq.~\ref{eq:model_adjust} and Eq.~\ref{eq:label_ratio}, we have
\begin{align}
f(\mathbf{x}; w_t^*) \approx q_t^{(y)}(\mathbf{x}) f(\mathbf{x}; w_0)
\end{align}
where
\[
q_t^{(y)}(\mathbf{x}) =  
\frac{n_t(y)}{n_0(y)} \cdot \frac{|D_0|}{|D_t|} 
\cdot \mathcal{N} \left( \mathbf{z} \mid \boldsymbol{\mu}_t^{(y)}, \boldsymbol{\Sigma}_t^{(y)} \right) 
/ \mathcal{N} \left( \mathbf{z} \mid \boldsymbol{\mu}_0^{(y)}, \boldsymbol{\Sigma}_0^{(y)} \right) 
\]
and we omit the normalization factor for ease of presentation. 

\paragraph{Optimization} By replacing $f(\mathbf{x}; w_t^*)$ in the forgetting regularizer (Eq.~\ref{eq:Rforg}), we obtain an approximate regularizer:
\begin{align}\label{eq:apprx_fg_risk}
\widetilde{R}_t^{\text{fg}}(w) 
= \frac{\lambda}{\sum_{i=1}^t |F_i|} \sum_{i=1}^t \sum_{(\mathbf{x}, y) \in F_i} d_{\mathrm{KL}}\big(f(\mathbf{x}; w), \, q_t^{(y)}(\mathbf{x}) \cdot f(\mathbf{x}; w_0)\big).
\end{align}
We minimize $\widetilde{R}_t^{\text{fg}}(w)$ by performing a single gradient descent step starting from $w_0$, following the same strategy in Sec~\ref{sec:opt_retent}. This approach is motivated by the fact that $w_0$ has already been trained to fit the original data distribution, and the adjusted target distribution $q_t^{(y)}(\mathbf{x}) \cdot f(\mathbf{x}; w_0)$ only reweights the original predictions without introducing significant changes to the overall prediction structure. Therefore, initializing from $w_0$ and applying a small adjustment suffices to align $f(\mathbf{x}; w)$ with the target distribution. The gradient is:
\begin{align}\label{eq:weight_update}
 \nabla_w \widetilde{R}_t^{\text{fg}}(w) \big|_{w = w_0}.
\end{align} 

The storage and computational cost for this update process primarily involves evaluating $\widetilde{R}_t^{\text{fg}}(w)$ and its gradient at $w_0$, which requires accessing the forgetting set $F_i$ and their associated statistics (e.g., $q_t^{(y)}(\mathbf{x})$). However, as the forgetting sets $F_i$ are typically much smaller than the full dataset $D_0$, the overall storage burden remains manageable. Furthermore, the Gaussian parameters $\boldsymbol{\mu}_t^{(y)}$, $\boldsymbol{\Sigma}_t^{(y)}$ are maintained and updated incrementally, which significantly reduces the need for recomputation. In future work, we plan to explore fully recursive strategies to further improve efficiency.

\subsubsection{Overall Algorithm} 
Combining the optimization strategies for both the retention term $R_t^{\text{ret}}(w)$ and the forgetting term $\widetilde{R}_t^{\text{fg}}(w)$, we adopt an incremental update approach. Upon each forgetting request, a single gradient descent update is performed starting from $w_0$. The update gradient aggregates the retention gradient in Eq.~\ref{eq:ret_recursive} and the forgetting gradient in Eq.~\ref{eq:weight_update}. Additionally, a Gaussian perturbation $b_t \sim \mathcal{N}(0, \phi_t)$ (where $\phi_t$ is a parameter) is introduced to add stochasticity and mitigate potential overfitting to individual forgetting request. Overall, the update rule is:
\begin{align}\label{eq:gd_overall}
w_t = w_0 - \gamma \cdot \frac{g_t}{\|g_t\|_2} - b_t,
\end{align}
where
\[
g_t = \nabla_{w_0} R_t^{\text{ret}}(w_0) + \nabla_w \widetilde{R}_t^{\text{fg}}(w) \big|_{w = w_0}
\]
and $\|\cdot\|_2$ is the L2 norm. The complete procedure is summarized in Algorithm~\ref{alg:Framework}.
\begin{algorithm}[t!]
\caption{SAFE: Streaming Forgetting Method}
\label{alg:Framework}
\begin{algorithmic}[1]
\Require $|D_0|$, $\{F_t\}_{t=1}^{T}$, $\mathbf{V}$, $\boldsymbol{\mu}$, $\boldsymbol{\Sigma}$, $w_0$, $\nabla_{w_0} R_0^{\text{ret}}(w_0)$, $\gamma$, $\phi_t$.
\Ensure $\{w_{t}\}_{t=1}^{T}$.
\State Initialize $w_0$ as the original model parameters.
\State Initialize $\boldsymbol{\mu}_0^{(y)}$, $\boldsymbol{\Sigma}_0^{(y)}$ from $\boldsymbol{\mu}$ and $\boldsymbol{\Sigma}$ for each class $y$.
\For{$t = 1,\ldots, T$}
    \State Update retention gradient $\nabla_{w_0} R_t^{\text{ret}}(w_0)$ using Eq.~\ref{eq:ret_recursive}.
    \State Update $\boldsymbol{\mu}_t^{(y)}$, $\boldsymbol{\Sigma}_t^{(y)}$ using Eqs.~\ref{eq:mu_update} and \ref{eq:sigma_update}.
    \State Estimate $q_t^{(y)}(\mathbf{x})$ for $(\mathbf{x}, y) \in F_t$ and compute $\widetilde{R}_t^{\text{fg}}(w)$.
    \State Calculate total gradient:
    \[
    g_t = \nabla_{w_0} R_t^{\text{ret}}(w_0) + \nabla_w \widetilde{R}_t^{\text{fg}}(w) \big|_{w = w_0}.
    \]
    \State Sample Gaussian perturbation $b_t \sim \mathcal{N}(0, \phi_t)$.
    \State Update model: $w_t = w_0 - \gamma \cdot \frac{g_t}{\|g_t\|_2} - b_t$.
    
\EndFor
\end{algorithmic}
\end{algorithm}

\subsection{Theoretical Results}\label{theoretical}
For the streaming unlearning task, the per-step objective is the risk estimator $R_t(w)$ defined in Eq.~\ref{eq:trisk}. In practice, we approximate $f(\mathbf{x}; w_t^*)$ by $q_t^{(y)}(\mathbf{x}) f(\mathbf{x}; w_0)$, resulting in the following surrogate objective:
\begin{align}
\widetilde{R}_{t}(w) =  R_t^{\text{ret}}(w) + \widetilde{R}_t^{\text{fg}}(w), \label{eq:erisk2}
\end{align}
where the two terms are defined in Eqs.~\ref{eq:Rret} and~\ref{eq:apprx_fg_risk}, respectively.

Our first theoretical result shows that $\widetilde{R}_{t}(w)$ closely approximates $R_t(w)$:
\begin{theorem}\label{risktheorem}
If $p_0(y|\mathbf{x}) = f(\mathbf{x}; w_0)$, then $|\widetilde{R}_{t}(w) - R_{t}(w)| \leq C \cdot \sum_{i = 1}^{t} |\mathcal{F}_{i}| / |\mathcal{D}_{t}|^{3/2}$, where $C$ is the number of classes.
\end{theorem}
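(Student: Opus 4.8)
The plan is to reduce the claim to a bound on the forgetting term alone and then control the per-point discrepancy between the true target $f(\mathbf{x}; w_t^*)$ and its surrogate $q_t^{(y)}(\mathbf{x}) f(\mathbf{x}; w_0)$. First, since $\widetilde{R}_t(w)$ and $R_t(w)$ share the identical retention term $R_t^{\text{ret}}(w)$ (Eq.~\ref{eq:Rret}), their difference collapses to $\widetilde{R}_t^{\text{fg}}(w) - R_t^{\text{fg}}(w)$, so the retention part plays no role. Both forgetting terms are averages of KL divergences $d_{\mathrm{KL}}(f(\mathbf{x};w), \cdot)$ sharing the same first argument, so I would apply the elementary identity $d_{\mathrm{KL}}(a, q) - d_{\mathrm{KL}}(a, p) = \sum_c a_c \log(p_c / q_c)$. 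This rewrites the per-point contribution as $\sum_c [f(\mathbf{x}; w)]_c \log\frac{[f(\mathbf{x}; w_t^*)]_c}{[q_t^{(c)}(\mathbf{x}) f(\mathbf{x}; w_0)]_c}$ (after normalization), reducing the whole problem to bounding a single log-ratio of the exact posterior against its reweighted surrogate.

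Second, I would invoke the exact Bayes decomposition (Eq.~\ref{eq:ds}) together with the hypothesis $p_0(y|\mathbf{x}) = f(\mathbf{x}; w_0)$ and $f(\mathbf{x}; w_t^*) = p_t(y|\mathbf{x})$. Under these, the class-independent marginal factor $p_0(\mathbf{x})/p_t(\mathbf{x})$ is absorbed by normalization and the empirical label ratio is exact, so the surviving log-ratio per class is precisely the discrepancy between the true class-conditional density ratio $p_t(\mathbf{x}|c)/p_0(\mathbf{x}|c)$ and its Gaussian surrogate $\mathcal{N}(\mathbf{z} \mid \boldsymbol{\mu}_t^{(c)}, \boldsymbol{\Sigma}_t^{(c)}) / \mathcal{N}(\mathbf{z} \mid \boldsymbol{\mu}_0^{(c)}, \boldsymbol{\Sigma}_0^{(c)})$. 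I would then bound this via the drift of the Gaussian parameters: from the recursive updates (Eqs.~\ref{eq:mu_update} and~\ref{eq:sigma_update}), both $\|\boldsymbol{\mu}_t^{(c)} - \boldsymbol{\mu}_0^{(c)}\|$ and $\|\boldsymbol{\Sigma}_t^{(c)} - \boldsymbol{\Sigma}_0^{(c)}\|$ are controlled by the deleted fraction, of order $\sum_{i=1}^t |F_i| / |D_t|$. A first-order expansion of the Gaussian log-density ratio in these perturbations, using bounded standardized features and a non-degenerate covariance, then yields a per-class bound.

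Finally, I would bound the per-point sum by $\sum_c |\log(\cdot)| \le C \max_c |\log(\cdot)|$, which is exactly where the factor $C$ (the number of classes) enters, and sum over the $\sum_{i=1}^t |F_i|$ forgetting points. The averaging prefactor $1/\sum_{i=1}^t |F_i|$ cancels the point count, leaving the stated scaling up to the uniform drift bound on the log-ratio.

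The hard part will be pinning down the exact $|D_t|^{3/2}$ denominator. The factor $\sum_{i=1}^t |F_i| / |D_t|$ from the parameter drift is immediate, but the additional $|D_t|^{-1/2}$ requires carefully combining the worst-case drift with a concentration/normalization argument for the standardized features $\mathbf{z}$ (e.g., that a typical deleted point's standardized feature deviates only mildly from the class mean, or that the normalization constant of the reweighted posterior contributes the extra $|D_t|^{-1/2}$). Securing this factor rigorously, rather than the cruder $\sum_{i=1}^t |F_i| / |D_t|$, is the technical crux, and it is where explicit regularity assumptions (bounded features, eigenvalue bounds on $\boldsymbol{\Sigma}_t^{(c)}$, and class probabilities bounded away from $0$) must be marshaled.
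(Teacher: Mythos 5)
Your opening moves coincide with the paper's: the retention terms cancel, and the difference collapses to the forgetting terms, whose per-point contribution is the $f(\mathbf{x};w)$-weighted log-ratio $\log\bigl(f(\mathbf{x};w_t^*)\bigr)-\log\bigl(q_t^{(y)}(\mathbf{x})f(\mathbf{x};w_0)\bigr)$. The gap is in how you bound that log-ratio, and you have correctly located it yourself: the parameter-drift argument cannot produce the $|D_t|^{-3/2}$ rate. The ingredient the paper uses, and that your plan is missing, is the Berry--Esseen theorem (Lemma~1 in Appendix~A.3): the standardized latent features of $D_t$ are treated as a normalized sum of $|D_t|$ i.i.d.\ vectors, so the error between the true class-conditional law and its fitted Gaussian is $O(1/\sqrt{|D_t|})$ \emph{uniformly} in $\mathbf{x}$, which gives $\sup_{(\mathbf{x},y)}\bigl|q_t^{(y)}(\mathbf{x})f(\mathbf{x};w_0)-f(\mathbf{x};w_t^*)\bigr|\le O\bigl(1/\sqrt{|D_t|}\bigr)$ per point. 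The paper then converts the log difference to a density difference via $|\log a-\log b|\le|1/a-1/b|=|a-b|/(ab)$ for probabilities in $(0,1]$, and crucially its forgetting sum carries a $1/|D_t|$ prefactor (not the $1/\sum_i|F_i|$ prefactor of Eq.~\ref{eq:apprx_fg_risk} that you assumed); the product of that prefactor, the $O(|D_t|^{-1/2})$ per-point error, and the $\sum_i|F_i|$ summands is exactly $C\sum_i|F_i|/|D_t|^{3/2}$, with $C$ absorbing the sum over classes as in your last step.

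Your substitute for Berry--Esseen bounds the wrong quantity. The drifts $\|\boldsymbol{\mu}_t^{(c)}-\boldsymbol{\mu}_0^{(c)}\|$ and $\|\boldsymbol{\Sigma}_t^{(c)}-\boldsymbol{\Sigma}_0^{(c)}\|$ from Eqs.~\ref{eq:mu_update} and~\ref{eq:sigma_update} control how far the Gaussian surrogate ratio $\mathcal{N}(\mathbf{z}\mid\boldsymbol{\mu}_t^{(c)},\boldsymbol{\Sigma}_t^{(c)})/\mathcal{N}(\mathbf{z}\mid\boldsymbol{\mu}_0^{(c)},\boldsymbol{\Sigma}_0^{(c)})$ is from $1$, not how far it is from the true ratio $p_t(\mathbf{x}\mid c)/p_0(\mathbf{x}\mid c)$: if the latent features were badly non-Gaussian, both parameter drifts could be tiny while the surrogate ratio is arbitrarily wrong, so no amount of regularity on eigenvalues or feature bounds repairs this route. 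Even under the implicit assumption that the true laws are exactly Gaussian (in which case the drift comparison is legitimate), the expansion delivers only $O\bigl(\sum_i|F_i|/|D_t|\bigr)$, as you concede, and there is no concentration or normalization mechanism in your sketch that supplies the remaining $|D_t|^{-1/2}$. The fix is to replace the drift expansion with a uniform Gaussian-approximation bound of CLT type on the standardized features, which is precisely what the paper does.
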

\noindent The detailed proof is provided in Appendix A.3. Theorem~\ref{risktheorem} indicates that the surrogate risk $\widetilde{R}_{t}(w)$ converges to the true risk $R_t(w)$ as the proportion of deleted data relative to the remaining data decreases.

After addressing the approximation of the true risk estimator, we further establish theoretical guarantees for the algorithm's additional approximations. Despite adopting several simplifications---including updating with only a single gradient descent step and modeling the data distribution using a Gaussian approximation---the algorithm's performance is still provably efficient under mild conditions. Before presenting the detailed performance guarantees, we formally define $(\epsilon, \delta)$-approximate unlearning, following~\cite{DBLP:conf/icml/GuoGHM20}:\vspace{-1mm}
\begin{definition}
Let $\mathsf{M}$ denote an unlearning algorithm that, given the original model $\mathsf{A}(D)$, the full dataset $D$, and the forgetting data $F$, produces an unlearned model. For any measurable subset of hypotheses $\mathcal{T} \subseteq \mathcal{H}$:
\begin{align*}
    e^{-\epsilon} \cdot \Pr\left[ \mathsf{A}(D \setminus F) \in \mathcal{T} \right] - \delta
    &\leq \Pr\left[ \mathsf{M}(\mathsf{A}(D), D, F) \in \mathcal{T} \right] \\
    &\leq e^{\epsilon} \cdot \Pr\left[ \mathsf{A}(D \setminus F) \in \mathcal{T} \right] + \delta.
\end{align*}
Here, $\Pr[\cdot]$ denotes the probability that a model belongs to $\mathcal{T}$, which represents a set of desirable unlearned models. The parameter $\epsilon \geq 0$ controls the multiplicative similarity between the distributions (smaller $\epsilon$ indicates greater similarity), while $\delta \geq 0$ allows for a small additive slack.
\end{definition}\vspace{-1mm}We then establish the following performance guarantee:\vspace{-1mm}
\begin{theorem}\label{theorem:th2}
Suppose the risk $R_{t}(w)$ has gradients upper bounded by $U$, i.e., $\|\nabla R_{t}(w)\|_2 \leq U$, and the model parameters satisfy $\|w\|_2 \leq W$. Set the learning rate $\gamma = \frac{\sqrt{W}}{K\sqrt{T}}$ and the perturbation variance $\phi_t = \frac{W\sqrt{2\ln{(1.25/\delta)}}}{\epsilon}$, where $K$ is a constant. Let $w_t$ denote the output of Algorithm~\ref{alg:Framework} at step $t$. Then, the following hold:

\noindent(i) The expected error at step $t$ compared to the optimal model $w_t^*$ is upper bounded:
\begin{align*}
 \mathbb{E}\left[R_{t}(w_t) - R_{t}(w_t^*)\right] &\leq O(\sqrt{T}).
\end{align*}
\noindent(ii) The accumulated unlearning regret across $T$ steps is upper bounded:
\begin{align*}
 \mathbb{E}\left[\sum_{t=1}^{T} \left( R_{t}(w_t) - R_{t}(w_t^*) \right) \right] &\leq O(\sqrt{T} + V_T),
\end{align*}where $V_T = \sum_{t=1}^{T} \| w_t^* - w_{t-1}^* \|_2$.

\noindent(iii) The unlearning at each step satisfies $(\epsilon, \delta)$-\textit{approximate unlearning}.
\end{theorem}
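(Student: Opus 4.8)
The plan is to prove the three parts by casting the single-step update rule of Eq.~\ref{eq:gd_overall} into the framework of online gradient descent with a drifting comparator, and then handling the privacy claim separately via the Gaussian mechanism. I would organize the proof around three self-contained arguments corresponding to parts (i)--(iii), reusing the machinery of dynamic regret analysis from the online learning literature that the paper has already invoked.

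For part (i), I would start from the update $w_t = w_0 - \gamma \cdot g_t / \|g_t\|_2 - b_t$ and treat it as a projected/normalized gradient step with respect to the surrogate risk $\widetilde{R}_t$. The key is to bound the per-step instantaneous error $\mathbb{E}[R_t(w_t) - R_t(w_t^*)]$. I would first use Theorem~\ref{risktheorem} to replace $R_t$ by $\widetilde{R}_t$ up to an additive error of order $C\sum_i|F_i|/|D_t|^{3/2}$, so that the analysis can proceed on the surrogate that the algorithm actually optimizes. Then, since the gradient is normalized to unit length and the step is a single descent move from the fixed anchor $w_0$, I would exploit the gradient bound $\|\nabla R_t(w)\|_2 \le U$ together with the parameter-norm bound $\|w\|_2 \le W$ to control the displacement $\|w_t - w_t^*\|_2$. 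Taking expectation over the zero-mean perturbation $b_t \sim \mathcal{N}(0,\phi_t)$ kills the linear noise term and leaves a variance contribution proportional to $\phi_t$ times the dimension; with $\gamma = \sqrt{W}/(K\sqrt{T})$, the descent term contributes $O(\gamma) = O(1/\sqrt{T})$ per step, and a standard smoothness/Lipschitz argument yields an $O(\sqrt{T})$ instantaneous bound. I would need to be careful that without convexity the bound comes from a descent-lemma-style inequality rather than from convex duality.

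For part (ii), I would sum the per-step guarantees over $t = 1,\dots,T$ and introduce the comparator drift. The standard dynamic-regret telescoping argument writes $\|w_{t+1} - w_t^*\|_2^2 \le \|w_{t+1} - w_{t+1}^*\|_2^2 + (\text{path-length correction})$, where the correction accumulates into $V_T = \sum_t \|w_t^* - w_{t-1}^*\|_2$. Combining the telescoped potential with the chosen learning rate gives the $O(\sqrt{T})$ term from the descent analysis plus an $O(V_T)$ term from the moving optimum, producing the claimed $O(\sqrt{T} + V_T)$ bound. The approximation error from Theorem~\ref{risktheorem}, summed over $T$ rounds, must also be shown to be absorbed into these dominant terms under the paper's mild assumptions on the deletion sizes.

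For part (iii), the argument is independent of the regret analysis and follows the Gaussian mechanism. Here I would observe that the deterministic part of the update, $w_0 - \gamma g_t/\|g_t\|_2$, has bounded sensitivity because the gradient is normalized and $\gamma$ is fixed, so the $L_2$-sensitivity of the output as a function of whether a given point is in $F_t$ is at most a constant multiple of $\gamma$; together with $\|w\|_2 \le W$ this yields an explicit sensitivity $\Delta$. Adding $b_t \sim \mathcal{N}(0,\phi_t)$ with $\phi_t = W\sqrt{2\ln(1.25/\delta)}/\epsilon$ then matches the classical Gaussian-mechanism calibration $\sigma \ge \Delta\sqrt{2\ln(1.25/\delta)}/\epsilon$, giving $(\epsilon,\delta)$-approximate unlearning at each step directly from the definition. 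I expect the main obstacle to be part (ii): establishing the dynamic-regret bound without convexity, since the usual telescoping relies on convexity to relate the function gap $R_t(w_t) - R_t(w_t^*)$ to the parameter distance $\|w_t - w_t^*\|_2$. I would need either a local smoothness/Polyak-type condition or a careful use of the normalized-gradient step to substitute for convexity, and verifying that these milder conditions suffice is where the real work lies.
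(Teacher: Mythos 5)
Your skeleton matches the paper's: a potential-function expansion of $\|w_t - w_t^*\|_2^2$ around the update rule, telescoping over $t$ with the comparator drift collected into $V_T$, and the Gaussian mechanism for part (iii). But there is a genuine gap at exactly the point you flag and defer. The paper's proof hinges on the inequality $\mathbb{E}\left[\left|R_t(w_t) - R_t(w_t^*)\right|\right] \leq U\,\mathbb{E}\left[(\nabla_{t+1})^\top (w_t - w_t^*)\right]$, where $\nabla_{t+1}$ is a difference of normalized gradient directions; this is precisely the first-order, convexity-type link between the function gap and the gradient-displacement inner product that you identify as the obstacle. Your proposed substitute, a descent-lemma/smoothness argument, cannot fill this role: smoothness controls $R_t(w_t) - R_t(w_0)$ (progress relative to the anchor), not $R_t(w_t) - R_t(w_t^*)$ (the gap to the per-round optimum), and no amount of Lipschitz bookkeeping converts one into the other without some star-convexity or Polyak-type condition that you name but do not establish. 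Since parts (i) and (ii) both rest on this link, the proposal as written does not close.

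Two secondary discrepancies. First, your accounting for part (i) points in the wrong direction: you say the descent term contributes $O(\gamma) = O(1/\sqrt{T})$ per step, but in the paper the per-step $O(\sqrt{T})$ arises from the non-telescoping potential term $\|w_{t-1} - w_{t-1}^*\|_2^2 / (2\gamma) \leq O(W^2/\gamma) = O(\sqrt{T})$, i.e., the bound scales like $1/\gamma$, which is why a single-step guarantee that grows with $T$ is all that is claimed. Relatedly, the paper's Theorem~\ref{theorem:th2} proof never routes through Theorem~\ref{risktheorem} or $\widetilde{R}_t$; it works directly with $R_t$ using only $\|\nabla R_t\|_2 \leq U$ and $\|w\|_2 \leq W$, so your extra approximation-error bookkeeping, while harmless, is not needed. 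Second, for part (iii) your sensitivity estimate of order $\gamma$ does not match the calibration $\phi_t = W\sqrt{2\ln(1.25/\delta)}/\epsilon$, which corresponds to taking the sensitivity to be the parameter-space scale $W$ (the relevant comparison is between the perturbed unlearned model and the independently retrained model $\mathsf{A}(D \setminus F)$, not between outputs on neighboring datasets); the paper itself only cites the standard Gaussian-mechanism proof here, so your version is more detailed but calibrated to the wrong quantity.
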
\vspace{-1mm}The cumulative error between the retrained model and the unlearned model is bounded by $O(\sqrt{T} + V_T)$. To the best of our knowledge, this is the first proven dynamic regret bound for the streaming unlearning problem. Previous online learning methods achieve dynamic regret bounds such as $O(\sqrt{V_T T})$~\cite{nonstaionary} and $O(T^{2/3})$~\cite{nonconvex1} without assuming convexity. In contrast, our bound of $O(\sqrt{T} + V_T)$ provides a tighter guarantee tailored to the streaming unlearning setting. Compared to batch machine unlearning baselines, which achieve bounds of $O(T^2)$~\cite{RememberWhatYouWanttoForget} and $O(T)$~\cite{DBLP:conf/icml/ChourasiaS23} in streaming scenarios, the SAFE algorithm's bound of $O(\sqrt{T} + V_T)$ is significantly tighter. 

\section{Experiments}

\subsection{Experiment Settings}
\textbf{Datasets and Models}: To validate the effectiveness of the proposed method, we conduct experiments on four datasets: MNIST~\cite{mnist}, Fashion~\cite{fashion}, CIFAR10~\cite{cifar}, and TinyImagenet~\cite{le2015tiny}. We use the official training sets of MNIST, Fashion, and CIFAR10 from PyTorch and TinyImagenet downloaded from Kaggle. In TinyImagenet, we randomly selected $80$K images to construct the training data. For the MNIST and Fashion datasets, we use a two-layer \textbf{CNN}~\cite{cnn}, while for the CIFAR10 and TinyImagenet datasets, we adopt a \textbf{ResNet-18} backbone~\cite{resnet} without the pre-trained weights. 

\noindent\textbf{Other Settings}: We conduct experiments on \emph{unlearning on the randomly selected subsets}, which includes the unlearning for $20$ rounds and randomly select $400$ data points for each round. Regarding hyperparameters, we tune $K$ in learning rate $\gamma$ to achieve better unlearning results, as detailed in Appendix B.5. We set $\lambda$ as $1000$ for MNIST and Fashion, then we set it as $120000$ for the CIFAR10 and TinyImagenet. All experiments are under $10$ different random seeds. 

\noindent\textbf{Baselines}: We compare SAFE\footnote{https://github.com/ShaofeiShen768/SAFE} with Retrain, which represents the standard results of retrained models, and two unlearning methods designed to handle streaming unlearning requests \textbf{without requiring remaining data}: Lcode~\cite{mehta2022deep} and Desc~\cite{dtd}. Additionally, we evaluate two other methods that \textbf{require access to the remaining data}: Unroll~\cite{unroll} and CaMU~\cite{camu}.

\noindent\textbf{Evaluations}: For effectiveness evaluations, we assess the unlearning algorithm using five metrics: $\textbf{RA}$ (Remaining Accuracy), $\textbf{FA}$ (Forgetting Accuracy), and $\textbf{TA}$ (Test Accuracy), which denote the prediction accuracy of the post-unlearning model on the remaining data, forgetting data, and test data, respectively. We also evaluate the attack accuracy of $\textbf{MIA}$ (Membership Inference Attack)~\cite{miainu,mia} and follow the same MIA evaluation protocol as~\cite{salun,DBLP:conf/nips/JiaLRYLLSL23}. \textbf{Closer values to the retrained model} indicates better unlearning performance for these metrics. Additionally, we compare $\textbf{R}$ (average rank) of the previous four metrics for each method. The lower rank stands for the method ranks higher among all five methods.

\begin{figure*}[htbp]

       \vspace{-2mm}
	\centering
        \subfigure[Time in MNIST]{
		\begin{minipage}[t]{0.18\linewidth}
			\centering
			\includegraphics[width=\linewidth]{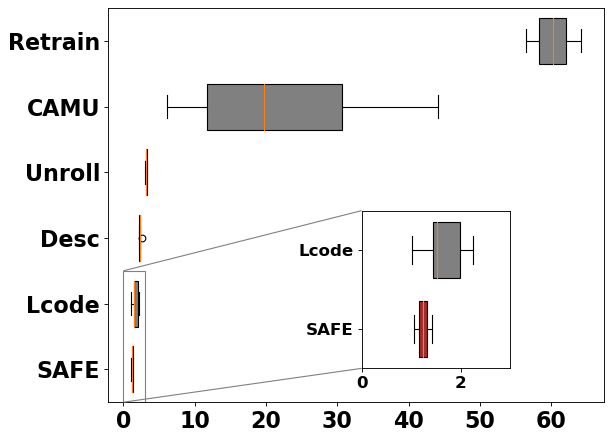}
		\end{minipage}
	}
        \hspace{2mm}
        \subfigure[Time in Fashion]{
		\begin{minipage}[t]{0.18\linewidth}
			\centering
			\includegraphics[width=\linewidth]{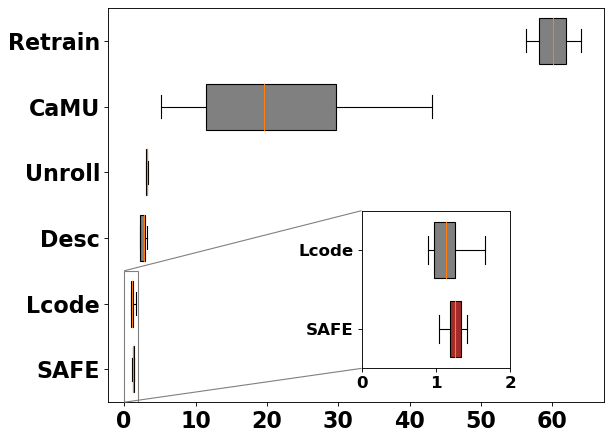}
		\end{minipage}
	}
        \hspace{2mm}
        \subfigure[Time in CIFAR10]{
		\begin{minipage}[t]{0.18\linewidth}
			\centering
			\includegraphics[width=\linewidth]{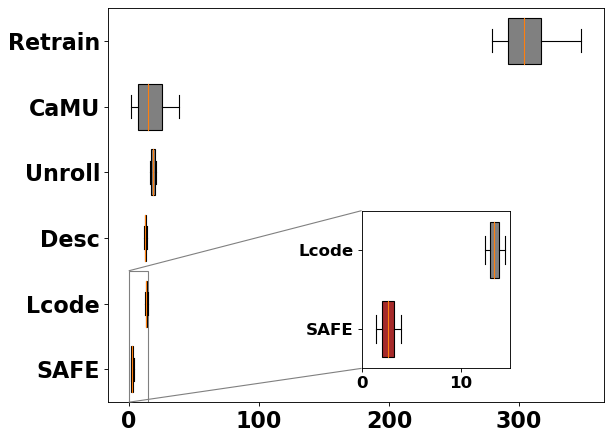}
		\end{minipage}
	}
        \hspace{2mm}
        \subfigure[Time in TinyImagenet]{
		\begin{minipage}[t]{0.18\linewidth}
			\centering
			\includegraphics[width=\linewidth]{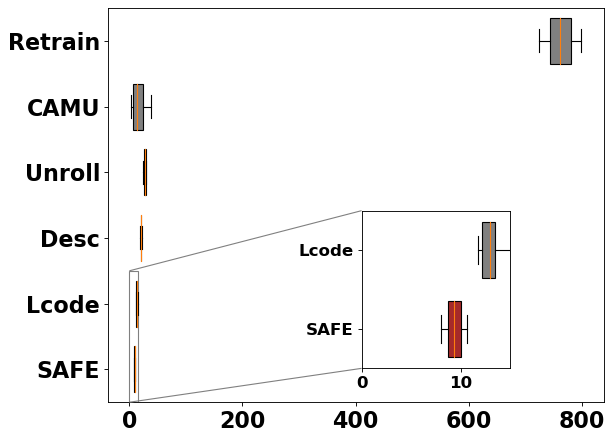}
		\end{minipage}
	}
        \vspace{-1mm}
	\caption{Comparisons of time costs for different unlearning algorithms (seconds) in streaming random subset unlearning. SAFE achieves the lowest time costs on MNIST, CIFAR10, and TinyImagenet, with at least $30$\% decrease compared with the method of the second-lowest cost.}
	\label{time}
    \vspace{1mm}
\end{figure*}

\subsection{Results Analysis}

\noindent\textbf{Accuracy Analysis:} We first evaluate the SAFE algorithm on the unlearning of sequential random subsets. Table~\ref{comparison1} presents the average performance across $20$ rounds of requests, each involving the removal of 400 randomly selected data points. The results indicate that SAFE consistently achieves performance closest to that of the retrained model across all datasets in most evaluations and achieves the highest average rank in all four datasets. Specifically, SAFE attains the best performance in $10$ evaluations and ranks the second best in $4$ evaluations out of the $16$ conducted. Notably, SAFE consistently delivers the best performance in terms of FA while maintaining superior TA. In terms of RA, all methods exhibit degradation following the unlearning of the forgetting data. However, SAFE demonstrates a notable ability to mitigate this degradation compared to alternative methods. These results collectively affirm that SAFE offers superior unlearning performance with minimal adverse effects on model prediction functionality, as evidenced by the RA, FA, and TA evaluations.

\noindent\textbf{MIA Analysis:} Table~\ref{comparison1} also shows that SAFE achieves the best MIA performance on the Fashion, CIFAR10 and TinyImagenet, underscoring its effectiveness on complex datasets. Compared to other methods, SAFE demonstrates greater stability in MIA. This stability is attributed to the fewer update steps required during unlearning, which results in less impact on model parameters and prediction outcomes.

\noindent\textbf{Efficiency Analysis:} We recorded the average time cost for $20$ rounds of unlearning, with the results in Figure~\ref{time}. SAFE achieved average costs of $1.13$ seconds on MNIST, while the second-fastest algorithm, Lcode, required $1.66$ seconds, much slower than SAFE. However, on the CIFAR10 dataset, the advantage of SAFE is even more pronounced. SAFE required only $2.55$ seconds per request, whereas the second-fastest algorithm, Descent, required $12.90$ seconds, which is nearly nine times slower than SAFE. Moreover, on the TinyImagenet dataset, SAFE achieved the highest efficiency, faster than all other methods. These time efficiency results highlight the leading advantage of SAFE in sequential requests, especially for complex datasets.

To further evaluate the SAFE algorithm, we present additional experiments under different request settings in Appendix B.4 and the results of streaming unlearning within the same class in Appendix B.7. These results further validate the effectiveness of SAFE.

\begin{table}[htbp]\footnotesize
\renewcommand{\arraystretch}{1.2}
        \vspace{2mm}
	\centering
	\caption{Comparison results in Random Subset Unlearning (avg\%$\pm$std\%). The \textbf{bold} record indicates the best, and the \underline{underlined} record indicates the second-best. The white backgrounds denote the unlearning methods that require remaining data in experiments, while the grey backgrounds denote the approaches without remaining data. SAFE can always achieve the best results in FA and achieve $3$ out of $4$ best results in TA. \textbf{The average ranks (R) of SAFE are all the highest among all five methods on different datasets}.}
        \vspace{3mm}
        {\fontsize{6.5}{7.5}\selectfont
	\begin{tabular}{p{0.78cm}<{\centering} | p{1.26cm}<{\centering} p{1.27cm}<{\centering} p{1.26cm}<{\centering} p{1.26cm}<{\centering} p{0.35cm}<{\centering} }
		\toprule
            \multirow{2}{*}{\textbf{Method}} &\textbf{RA} &\textbf{FA} &\textbf{TA}  &\textbf{MIA}&\textbf{R}\\

            ~ &\multicolumn{5}{c}{\textbf{MNIST}} \\ 
            
            \midrule
            \textbf{Retrain}& 99.68$\pm$0.05& 98.89$\pm$0.09& 99.00$\pm$0.05& 79.25$\pm$1.14& -\\

            \textbf{Unroll}&  \underline{99.24$\pm$0.22}& \underline{98.91$\pm$0.15}& \textbf{98.61$\pm$0.19}& \underline{79.27$\pm$1.15}& \textbf{1.75}\\

            \textbf{CaMU}& 98.94$\pm$0.36& 98.72$\pm$0.79& 98.54$\pm$0.42& 79.05$\pm$1.13& 3.25\\
            \rowcolor{gray!20}
            
            \textbf{Lcode} & 96.26$\pm$1.95& 96.27$\pm$1.88& 95.60$\pm$1.94&  78.79$\pm$2.22& 5\\
            
            \rowcolor{gray!20}
            \textbf{Desc}& 98.78$\pm$0.53& 98.72$\pm$0.53& 98.27$\pm$0.46& \textbf{79.24$\pm$1.14}& 3\\
            
            \rowcolor{gray!40}
            \textbf{SAFE}& \textbf{99.25$\pm$0.13}& \textbf{98.89$\pm$0.12}& \underline{98.58$\pm$0.11}& 79.28$\pm$1.14& \textbf{1.75}\\\midrule
            
            ~ &\multicolumn{4}{c}{\textbf{Fashion}} \\ 
            
            \midrule
            \textbf{Retrain}& 96.44$\pm$0.15&90.76$\pm$0.33& 90.40$\pm$0.15& 79.57$\pm$0.51& -\\

            \textbf{Unroll}& 90.61$\pm$0.91& 89.08$\pm$0.94& 87.99$\pm$0.85& \textbf{79.14$\pm$0.65}& \underline{2.75}\\

            \textbf{CaMU}& \underline{91.32$\pm$0.36}& \underline{90.45$\pm$0.83}& \underline{89.00$\pm$0.40}& 78.07$\pm$0.53& 2.75\\

            \rowcolor{gray!20}
            \textbf{Lcode} & 86.21$\pm$4.27& 86.26$\pm$4.15& 82.28$\pm$3.89& 78.60$\pm$0.53& 4.5\\
            
            \rowcolor{gray!20}
            \textbf{Desc}& 89.52$\pm$1.38& 89.27$\pm$1.54& 87.40$\pm$1.10& 78.28$\pm$0.77&3.75\\
            
            \rowcolor{gray!40}
            \textbf{SAFE}& \textbf{92.03$\pm$0.19}& \textbf{90.88$\pm$0.34}& \textbf{89.22$\pm$0.19}& \underline{79.00$\pm$0.44}&\textbf{1.25}\\\midrule

            ~ &\multicolumn{4}{c}{\textbf{CIFAR10}} \\ 
            
            \midrule
            \textbf{Retrain}& 97.61$\pm$0.25& 91.78$\pm$0.49& 91.19$\pm$0.34& 64.38$\pm$1.34& -\\

            \textbf{Unroll}& 93.59$\pm$2.57& \underline{90.46$\pm$1.92}& \underline{86.97$\pm$2.12}& 74.96$\pm$2.95& \underline{2.75}\\

            \textbf{CaMU}& \textbf{95.71$\pm$1.09}& 93.13$\pm$3.32& 84.99$\pm$0.15& 74.54$\pm$2.95& 3.25\\

            \rowcolor{gray!20}
            \textbf{Lcode} & 23.95$\pm$3.71& 24.08$\pm$4.18& 23.33$\pm$3.54& 54.47$\pm$8.42&4.5\\
            
            \rowcolor{gray!20}
            \textbf{Desc}& 86.24$\pm$0.07& 84.94$\pm$0.30& 85.50$\pm$0.01& \textbf{71.95$\pm$7.67}&3\\
            
            \rowcolor{gray!40}
            \textbf{SAFE}& \underline{94.54$\pm$1.90}& \textbf{92.49$\pm$1.65}& \textbf{88.31$\pm$1.61}& \underline{72.63$\pm$2.96}&\textbf{1.5}\\\midrule

            ~ &\multicolumn{4}{c}{\textbf{TinyImagenet}} \\ 
            
            \midrule
            \textbf{Retrain}& 75.57$\pm$0.51& 42.71$\pm$0.57& 42.50$\pm$0.54& 29.69$\pm$3.31&-\\

            \textbf{Unroll}& \textbf{53.87$\pm$2.00}& \underline{48.62$\pm$1.99}& \underline{42.08$\pm$1.42}& \underline{24.20$\pm$0.83}&\underline{1.75}\\

            \textbf{CaMU}& 38.60$\pm$2.14& 36.17$\pm$3.78& 35.06$\pm$1.98& 39.07$\pm$3.87&3.25\\

            \rowcolor{gray!20}
            \textbf{Lcode} & 9.34$\pm$5.19& 9.17$\pm$5.21& 6.84$\pm$3.29& 54.67$\pm$5.81&5\\
            
            \rowcolor{gray!20}
            \textbf{Desc}& 27.50$\pm$11.18& 27.48$\pm$11.30& 24.63$\pm$9.65& 37.73$\pm$6.75&3.75\\
            
            \rowcolor{gray!40}
            \textbf{SAFE}& \underline{49.64$\pm$2.99}& \textbf{47.70$\pm$2.20}& \textbf{42.24$\pm$2.45}& \textbf{25.99$\pm$1.59}&\textbf{1.25}\\
                        
            \bottomrule

	\end{tabular}\label{comparison1}
    }
	\vspace{-8mm}
\end{table}

\subsection{Ablation Study}

Table~\ref{ablationpaper} presents the results of the ablation study for the proposed algorithm, where we sequentially remove the initial training data gradient (\textbf{TR}) and the distribution shift loss (\textbf{DS}). First, when the initial training data gradient is removed, there is a significant drop in accuracies across all datasets. These experimental results indicate that the distribution shift loss contributes significantly to the unlearning process. In addition, when the distribution shift loss is removed, the forgetting data accuracies on all four datasets are similar to the original forgetting data accuracies, implying that the information of streaming forgetting data has not been effectively removed from the model.
\begin{table}[h]\footnotesize
\renewcommand{\arraystretch}{1.2}
        \vspace{-1mm}
	\centering
	\caption{Ablation study results (avg\%$\pm$std\%). Removing TR reduces the performance on RA and TA, which means \textbf{reducing the prediction capability of the model} while removing the DS consistently leads to the highest FA across all datasets, demonstrating \textbf{its critical role in effective forgetting}.}
        \vspace{4mm}
        {\fontsize{6.5}{7.5}\selectfont
	\begin{tabular}{p{1.0cm}<{\centering} | p{1.4cm}<{\centering} p{1.4cm}<{\centering} p{1.4cm}<{\centering} p{1.4cm}<{\centering}  }
		\toprule
            \multirow{2}{*}{\textbf{Method}} &\textbf{RA} &\textbf{FA} &\textbf{TA}  &\textbf{MIA}\\

            ~ &\multicolumn{4}{c}{\textbf{MNIST}} \\ 
            
            \midrule
            \textbf{Retrain}& 99.68$\pm$0.05& 98.89$\pm$0.09& 99.00$\pm$0.05& 79.25$\pm$1.14\\

            \textbf{w/o TR}& 98.68$\pm$0.08&98.12$\pm$0.25& 98.09$\pm$0.10& 79.31$\pm$1.15\\

            \textbf{w/o DS}& \textbf{99.35$\pm$0.08}& \underline{99.19$\pm$0.05}& \textbf{98.71$\pm$0.07}&  \underline{79.29$\pm$1.14}\\

            \rowcolor{gray!40}
            \textbf{SAFE}&  \underline{99.25$\pm$0.13}& \textbf{98.89$\pm$0.12}&  \underline{98.58$\pm$0.11}& \textbf{79.28$\pm$1.14}\\\midrule
            
            ~ &\multicolumn{4}{c}{\textbf{Fashion}} \\ 
            
            \midrule
            \textbf{Retrain}& 96.44$\pm$0.15&90.76$\pm$0.33& 90.40$\pm$0.15& 79.57$\pm$0.51\\

            \textbf{w/o TR}& 91.54$\pm$0.19&\underline{90.02$\pm$0.50}& 88.80$\pm$0.16& \underline{79.00$\pm$0.48}\\

            \textbf{w/o DS}& \textbf{92.40$\pm$0.39}&91.91$\pm$0.67& \textbf{89.48$\pm$0.33}& 78.47$\pm$0.48\\

            \rowcolor{gray!40}
            \textbf{SAFE}& \underline{92.03$\pm$0.19}& \textbf{90.88$\pm$0.34}& \underline{89.22$\pm$0.19}& \textbf{79.00$\pm$0.44}\\\midrule

            ~ &\multicolumn{4}{c}{\textbf{CIFAR10}} \\ 
            
            \midrule
            \textbf{Retrain}& 97.61$\pm$0.25& 91.78$\pm$0.49& 91.19$\pm$0.34& 64.38$\pm$1.34\\

            \textbf{w/o TR}& 91.22$\pm$2.27&89.20$\pm$1.76& 85.30$\pm$2.94& \underline{73.46$\pm$3.70}\\

            \textbf{w/o DS}& \underline{94.40$\pm$1.46}&\underline{94.06$\pm$1.22}& \textbf{88.43$\pm$1.18}& 73.88$\pm$3.59\\

            \rowcolor{gray!40}
            \textbf{SAFE}& \textbf{94.54$\pm$1.90} & \textbf{92.49$\pm$1.65} & \underline{88.31$\pm$1.61}& \textbf{72.63$\pm$2.96}\\\midrule
            
            ~ &\multicolumn{4}{c}{\textbf{TinyImagenet}} \\ 
            
            \midrule
            \textbf{Retrain}& 75.57$\pm$0.51& 42.71$\pm$0.57& 42.50$\pm$0.54& 29.69$\pm$3.31\\
            
            \textbf{w/o TR}& 48.39$\pm$2.99&\textbf{47.12$\pm$2.22}& \underline{41.68$\pm$2.44}& \underline{24.97$\pm$1.70}\\

            \textbf{w/o DS}& \textbf{49.93$\pm$4.05}&49.46$\pm$3.91& 43.56$\pm$3.24& 24.68$\pm$1.92\\

            \rowcolor{gray!40}
            \textbf{SAFE}& \underline{49.64$\pm$2.99}& \underline{47.70$\pm$2.20}& \textbf{42.24$\pm$2.45}& \textbf{25.99$\pm$1.59}\\
            
            \bottomrule

	\end{tabular}\label{ablationpaper}
    }
       \vspace{-6mm}
\end{table}

\section{Conclusion}

In this paper, we address the practical requirements of long sequential unlearning by introducing a streaming unlearning paradigm. This paradigm is designed to realize sequential unlearning requests with high forgetting accuracy and efficiency. We first conceptualize unlearning as the distribution shift problem and estimate the distribution of low-dimensional vectors of the training data of each class. Then, we propose a novel SAFE algorithm alongside a first-order optimization that can reach a low regret bound. We conducted extensive experiments and the results show that SAFE consistently achieves top or near-top performances across various evaluations, including more than double the time efficiency compared with the second-most efficient algorithm, demonstrating its clear advantages over other baseline methodologies.





\newpage

\begin{ack}
The authors thank the NVIDIA Academic Hardware Grant Program for supporting their experiments, the Australian Research Council (DE230101116 and DP240103070) for Xu, and the Australian Research Council Discovery Projects (DP240103070), and Australian Research Council ARC Early Career Industry Fellowship (IE240100275) for Chen.
\end{ack}

\bibliography{ecai25}

\appendix
\onecolumn

\section{Appendix}

\setcounter{theorem}{0}

\subsection{Related Work}

\subsubsection{Machine Unlearning}

Machine unlearning requires the removal of information about the forgetting data from the original model while preserving the knowledge contained in the remaining data~\cite{sisa,acmsurvey}. Current research on machine unlearning can be categorized into two primary branches based on unlearning requests: \textit{batch unlearning} and \textit{streaming unlearning}. 
Batch unlearning focuses on removing a specific data group within the same batch~\cite{sisa,unroll,ts,boundary,scrub,camu,laf}. This approach typically requires access to the original training data and fine-tuning it to maintain high prediction performance. For instance, \cite{sisa} proposes retraining the model using small data shards from the remaining dataset and ensembling the final results for increased efficiency. Similarly, \cite{unroll} performs incremental training with the forgetting data in the first batch, recording gradients during the initial batch and adding these recorded gradients to the weights after incremental training. 
In contrast, streaming unlearning addresses continuous data removal requests~\cite{ada,onlinelr,dtd,langevin}. For example, \cite{ada} extends \cite{sisa} to be more adaptive to incremental and decremental learning requests in a streaming context. \cite{dtd} proposes a perturbed gradient-descent algorithm on data partitions to update models for streaming unlearning requests. \cite{langevin} fine-tunes the model with noisy gradients for unlearning, which can be extended to streaming unlearning with limited error increases. 
However, these approaches still face limitations. Some are restricted to convex loss functions~\cite{dtd}, while others still rely on full training data access and retraining throughout the unlearning process~\cite{sisa,langevin}. These divergent methodologies underscore the challenges of efficiently applying machine unlearning across various data types and model structures~\cite{ada,onlinelr,dtd,langevin}.

\subsubsection{Online Learning}

Online learning focuses on the task of learning from a continuous data stream, with the minimization of regret risk as the primary objective~\cite{onlinesurvey}. As the size of the incoming data increases, the training data distribution may undergo significant shifts. Consequently, effectively and efficiently adapting to the shifted distribution becomes one of the main challenges in online learning. Among these studies, extensions of regret risk have become an essential area of research. The basic regret compares the cumulative risks of models at each unlearning step with the global optimal model obtained after processing all data in the stream~\cite{ogd}. Adaptive regret divides the entire data stream into smaller time windows and compares the cumulative risks with the optimal models within these windows~\cite{adaregret}. Dynamic regret directly compares the updated models with the optimal ones for each learning request~\cite{dynamicregret,dynamicregret1,dynamicregret2}. Optimizing regret risk is critical to determining the effectiveness and efficiency of online learning algorithms. Specifically, optimization methods include Online Gradient Descent (OGD)~\cite{ogd} for first-order optimization and Online Newton Step (ONS)~\cite{ons} for second-order optimization. Additionally, Online Mirror Descent (OMD)~\cite{OMD} is a common approach that generalizes OGD by performing updates in the dual space, which can be transformed through a regularizer. It is worth noting that under different types of regret, various optimization methods achieve different rigorous error bounds. Finally, in practical problems such as label shift, several online learning algorithms have been proposed~\cite{onlinelds1,onlinelds2,onlinelds3}. These approaches connect label shifts with online learning by continuously updating classification margins. While these algorithms address learning tasks with data streams, there remains a significant research gap in developing online algorithms specifically for streaming unlearning requests.


\subsection{Notation}\label{appendix:notion}

We provide a table of all notations of the main paper in Table \ref{notiontable}.

\begin{table}[H]
\footnotesize
    \begin{center}
    \caption{Table of Notation}
    \vspace{-2mm}
    \label{notiontable}
    \begin{tabular}{p{1.5cm} p{8cm} }
    \toprule
    Notation &  Explanation\\
    \midrule
    $\mathsf{A}$ &  Learning algorithm\\
    $\mathsf{M}$ &  Unlearning algorithm\\
    $t$ &  Round number of unlearning request\\
    $T$ &  Total length of unlearning request\\
    $D_{0}$ &  Original training data\\
    $D_{t}$ &  Remaining data in the $t$-th round of request\\
    $F_{t}$ &  Forgetting data in the $t$-th request\\
    $f(\cdot; \cdot)$ &  Prediction model\\
    $\mathcal{L}(\cdot,\cdot)$ &  Training loss\\
    $\mathcal{L}_{re}(\cdot,\cdot)$ &  Remaining data loss\\
    $\mathcal{L}_{fr}(\cdot,\cdot)$ &  Forgetting data loss\\
    $\ell(\cdot,\cdot)$ &  Classification loss function\\
    $d_{\text{KL}}(\cdot,\cdot)$ &  Cosine similarity loss\\
    $w_t$ &  Updated parameters in the $t$-th round of request\\
    $w_t^*$ &  Optimal parameters in the $t$-th round of request\\
    $\boldsymbol{\mu}_t$, $\boldsymbol{\Sigma}_t$ &  Mean and covariance matrix
estimated Gaussian distribution of remaining data of class $y$ in the $t$-th round of request\\
    $\nabla(\cdot)$ &  Gradient of the function\\
    $||\cdot||$ & L2 norm\\
    $\gamma$ &  Learning rate in each update\\
    $\epsilon$ &  Mean value for Gaussian perturbation\\
    $\delta$ &  Variance value for Gaussian perturbation\\
    $W$ &  Upper bound of the norm of model parameters\\
    $U$ &  Upper bound of the norm of gradients\\
    \bottomrule
    \end{tabular}
    \end{center}  
\end{table}

\subsection{Theoritical Proof}\label{appendix:tp}

\setcounter{theorem}{0} 

\begin{lemma}(\textbf{Berry-Esseen theorem}) Let $\mathbf{X}_1, \mathbf{X}_2, \ldots, \mathbf{X}_n$ be independent and identically distributed random vectors in $\mathbb{R}^d$ with mean vector $\mathbf{\mu}$ and covariance matrix $\mathbf{\Sigma}$. The Berry-Esseen theorem in the multivariate case states that the upper bound of the error between the real distribution and normalized Gaussian distribution is:
\begin{align}
  \sup_{\mathbf{z} \in \mathbb{R}^d} \left| \mathbb{P}\left(\mathbf{S}_n \leq \mathbf{z}\right) - \Phi_{\mathbf{\Sigma}}(\mathbf{z}) \right| \leq O(\frac{1}{\sqrt{n}}),  ,  
\end{align}
where $\mathbf{S}_n = \frac{1}{\sqrt{n}} \sum_{i=1}^{n} (\mathbf{X}_i - \mathbf{\mu})$ is the normalized sum of the random variables, $\Phi_{\mathbf{\Sigma}}(\mathbf{z})$ is the cumulative distribution function of the multivariate normal distribution with mean vector $\mathbf{0}$ and covariance matrix $\mathbf{\Sigma}$, and $\|\mathbf{X}_1 - \mathbf{\mu}\|$ denotes the Euclidean norm.

\end{lemma}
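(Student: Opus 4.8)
The plan is to establish this classical bound via the Fourier-analytic (characteristic-function) method, adapted to the multivariate setting. Write $\mathbf{Y}_i = \mathbf{X}_i - \boldsymbol{\mu}$ for the centered summands, so that $\mathbf{S}_n = n^{-1/2}\sum_{i=1}^{n}\mathbf{Y}_i$ has mean $\mathbf{0}$ and covariance $\boldsymbol{\Sigma}$, and denote by $\hat{\nu}(\mathbf{t}) = \mathbb{E}[e^{i\mathbf{t}^\top \mathbf{Y}_1}]$ the characteristic function of a single centered summand. Because the $\mathbf{Y}_i$ are i.i.d., the characteristic function of $\mathbf{S}_n$ factorizes as $\hat{\mu}_n(\mathbf{t}) = \hat{\nu}(\mathbf{t}/\sqrt{n})^n$, while the target Gaussian limit has characteristic function $e^{-\frac{1}{2}\mathbf{t}^\top\boldsymbol{\Sigma}\mathbf{t}}$. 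The whole argument reduces to two tasks: (a) converting the supremum distance between distribution functions into an integral of the difference of characteristic functions, and (b) estimating that difference.

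For task (a), I would invoke a multivariate smoothing inequality in the style of Esseen and Bhattacharya--Rao: for any $T_n > 0$, the quantity $\sup_{\mathbf{z}}\lvert \mathbb{P}(\mathbf{S}_n \le \mathbf{z}) - \Phi_{\boldsymbol{\Sigma}}(\mathbf{z})\rvert$ is bounded by a constant depending on $d$ times $\int_{\|\mathbf{t}\|\le T_n} \frac{\lvert \hat{\mu}_n(\mathbf{t}) - e^{-\frac{1}{2}\mathbf{t}^\top\boldsymbol{\Sigma}\mathbf{t}}\rvert}{\|\mathbf{t}\|}\,d\mathbf{t}$, plus a remainder of order $1/T_n$ arising from convolving with a smoothing kernel of bandwidth $T_n$. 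This localizes the problem to bounding the characteristic-function difference on a bounded frequency region.

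For task (b), the core estimate, I would Taylor-expand $\log\hat{\nu}$ about the origin. Since $\mathbb{E}[\mathbf{Y}_1]=\mathbf{0}$ and $\mathrm{Cov}(\mathbf{Y}_1)=\boldsymbol{\Sigma}$, and assuming a finite third absolute moment $\rho = \mathbb{E}\|\mathbf{Y}_1\|^3$, one obtains $\log\hat{\nu}(\mathbf{s}) = -\frac{1}{2}\mathbf{s}^\top\boldsymbol{\Sigma}\mathbf{s} + O(\rho\|\mathbf{s}\|^3)$ for small $\|\mathbf{s}\|$. Setting $\mathbf{s}=\mathbf{t}/\sqrt{n}$ and raising to the $n$-th power yields $\hat{\mu}_n(\mathbf{t}) = e^{-\frac{1}{2}\mathbf{t}^\top\boldsymbol{\Sigma}\mathbf{t}}\bigl(1 + O(\rho\|\mathbf{t}\|^3/\sqrt{n})\bigr)$ uniformly on $\|\mathbf{t}\|\le T_n$, provided $T_n$ is of order $\sqrt{n}$. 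Substituting this into the smoothing bound and integrating against the Gaussian weight $e^{-\frac{1}{2}\mathbf{t}^\top\boldsymbol{\Sigma}\mathbf{t}}$, which renders the integral convergent, produces a leading contribution of order $\rho/\sqrt{n}$; balancing the $1/T_n$ remainder against this choice of $T_n$ gives the claimed $O(1/\sqrt{n})$ rate for fixed dimension $d$.

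The \textbf{main obstacle} is the behavior of $\hat{\nu}$ away from the origin, where the Taylor expansion is no longer valid. One must separately control the intermediate-frequency region in which $\|\mathbf{t}/\sqrt{n}\|$ is bounded away from zero: here I would impose a non-degeneracy (Cramér-type) condition ensuring $\lvert\hat{\nu}(\mathbf{s})\rvert < 1$ uniformly on compact sets excluding the origin, so that $\lvert\hat{\nu}(\mathbf{t}/\sqrt{n})\rvert^n$ decays geometrically and contributes negligibly. Managing this tail together with the $d$-dependence of the smoothing constant is the delicate part of the argument. Since $d$ (the projection dimension) is fixed throughout this paper, the rate collapses to the stated $O(1/\sqrt{n})$, and the result may equivalently be \emph{cited} directly from the standard literature on multivariate Berry--Esseen bounds.
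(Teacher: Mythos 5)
The paper does not actually prove this lemma: it is stated as the classical multivariate Berry--Esseen theorem and used as an imported result, so your Fourier-analytic sketch is doing work the authors simply delegate to the literature (Bhattacharya--Rao). Your outline is the standard and correct route --- Esseen-type smoothing inequality to pass from the Kolmogorov distance to an integral of $|\hat{\mu}_n(\mathbf{t}) - e^{-\frac{1}{2}\mathbf{t}^\top\boldsymbol{\Sigma}\mathbf{t}}|/\|\mathbf{t}\|$, then a Taylor expansion of $\log\hat{\nu}$ near the origin --- but two points deserve flagging. First, you correctly introduce the hypothesis $\rho = \mathbb{E}\|\mathbf{Y}_1\|^3 < \infty$ (and implicitly the non-singularity of $\boldsymbol{\Sigma}$), neither of which appears in the lemma as stated; without a finite third moment the $O(1/\sqrt{n})$ rate is false, so your added assumption is not optional and in fact exposes a gap in the paper's own statement. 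Second, the Cram\'er-type non-degeneracy condition you invoke for the intermediate-frequency region is not needed for the Berry--Esseen rate (it is required only for higher-order Edgeworth expansions): choosing the cutoff $T_n \asymp \sqrt{n}/\rho$ small enough that $\|\mathbf{t}\|/\sqrt{n}$ stays within the radius where $|\hat{\nu}(\mathbf{s})| \leq e^{-\frac{1}{4}\mathbf{s}^\top\boldsymbol{\Sigma}\mathbf{s}}$ holds already controls the entire integration range, and the $1/T_n$ smoothing remainder is then $O(\rho/\sqrt{n})$. With that simplification your argument closes, and as you note, the result can equally be cited outright --- which is what the paper does.
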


\setcounter{theorem}{0} 

\begin{theorem}\label{risktheorem2}
If $p_0(y|\mathbf{x}) = f(\mathbf{x}; w_0)$, then $|\widetilde{R}_{t}(w) - R_{t}(w)| \leq C \cdot \sum_{i = 1}^{t} |\mathcal{F}_{i}| / |\mathcal{D}_{t}|^{3/2}$, where $C$ is the number of classes.
\end{theorem}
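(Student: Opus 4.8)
The plan is to exploit the fact that $\widetilde{R}_t(w)$ and $R_t(w)$ share the \emph{identical} retention term $R_t^{\text{ret}}(w)$, so that the whole difference collapses onto the forgetting regularizer: $\widetilde{R}_t(w) - R_t(w) = \widetilde{R}_t^{\text{fg}}(w) - R_t^{\text{fg}}(w)$. Using the elementary identity $d_{\mathrm{KL}}(p,q) - d_{\mathrm{KL}}(p,q^\star) = \sum_c p_c \log(q^\star_c/q_c)$ with $p = f(\mathbf{x};w)$, the true target $q^\star = f(\mathbf{x};w_t^*)$, and the surrogate target $q_c \propto q_t^{(c)}(\mathbf{x})\,f(\mathbf{x};w_0)_c$, the risk gap becomes
\begin{align*}
\widetilde{R}_t(w) - R_t(w) = \frac{\lambda}{\sum_{i=1}^t |F_i|} \sum_{i=1}^t \sum_{(\mathbf{x},y)\in F_i} \sum_{c} f(\mathbf{x};w)_c \, \log\frac{f(\mathbf{x};w_t^*)_c}{q_t^{(c)}(\mathbf{x})\, f(\mathbf{x};w_0)_c}.
\end{align*}
It therefore suffices to bound, uniformly over forgetting points and classes, the log-ratio between the true unlearned posterior and its density-ratio surrogate.

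Next I would expand this log-ratio through the Bayes decomposition of Eq.~\ref{eq:ds}. Under the hypothesis $p_0(y|\mathbf{x}) = f(\mathbf{x};w_0)$ together with the identification $f(\mathbf{x};w_t^*) \approx p_t(y|\mathbf{x})$, the label ratio $p_t(c)/p_0(c)$ and the $w_0$-prediction cancel exactly against the construction of $q_t^{(c)}$, leaving two residual error sources: (i) the marginal feature-shift factor $p_0(\mathbf{x})/p_t(\mathbf{x})$ that the surrogate discards, and (ii) the discrepancy between the exact class-conditional ratio $p_t(\mathbf{x}|c)/p_0(\mathbf{x}|c)$ and the Gaussian surrogate ratio assembled from $\mathcal{N}(\mathbf{z}\mid\boldsymbol{\mu}_t^{(c)},\boldsymbol{\Sigma}_t^{(c)})$ and $\mathcal{N}(\mathbf{z}\mid\boldsymbol{\mu}_0^{(c)},\boldsymbol{\Sigma}_0^{(c)})$. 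Because $D_t$ is obtained from $D_0$ by deleting only $\sum_{i=1}^t|F_i|$ points, both residuals are close to unity, so a first-order expansion ($\log(1+u)\approx u$) reduces each log-factor to its corresponding additive deviation.

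I would then quantify the two deviations. The dropped marginal shift and the change in the class-conditional laws are both governed by the removed fraction of mass, contributing a factor of order $\sum_{i=1}^t|F_i|/|D_t|$; the Gaussianization error is controlled by the multivariate Berry--Esseen Lemma, which caps the gap between the empirical class-conditional law of the standardized features $\mathbf{z}$ and its Gaussian model by $O(1/\sqrt{n_t(c)}) = O(1/\sqrt{|D_t|})$. Multiplying the removed-mass factor by the Berry--Esseen rate gives a per-class, per-point bound of order $\sum_{i=1}^t|F_i|/|D_t|^{3/2}$; bounding $f(\mathbf{x};w)_c \le 1$ and summing over the $C$ classes yields the factor $C$ (with the $O(1)$ constant $\lambda$ absorbed), while the outer normalization $1/\sum_{i=1}^t|F_i|$ exactly cancels the summation over the $\sum_{i=1}^t|F_i|$ forgetting points, producing $|\widetilde{R}_t(w)-R_t(w)| \le C\cdot \sum_{i=1}^t|F_i|/|D_t|^{3/2}$ as claimed.

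The main obstacle I anticipate is this last step: the Berry--Esseen Lemma as stated controls the \emph{cumulative distribution function}, whereas the KL log-ratio requires a \emph{pointwise density} bound on $p_t(\mathbf{x}|c)/\widehat{p}_t(\mathbf{x}|c)$. Bridging this gap rigorously needs either a smoothness and boundedness assumption on the densities (so that CDF closeness transfers to PDF closeness) or a reformulation of the KL terms through distribution functions. I would also need to justify that the discarded marginal factor and the surrogate's normalization constant combine cleanly—since they enter every class identically and interact with $\sum_c f(\mathbf{x};w)_c = 1$—and that the first-order log expansion holds uniformly, i.e.\ that the surrogate density stays bounded away from zero on the forgetting points.
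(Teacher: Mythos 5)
Your skeleton is the paper's: the difference collapses onto the forgetting term, the gap between the two KL terms becomes an expected log-ratio of the true target $f(\mathbf{x};w_t^*)$ against the surrogate $q_t^{(y)}(\mathbf{x})f(\mathbf{x};w_0)$, the hypothesis $p_0(y|\mathbf{x})=f(\mathbf{x};w_0)$ cancels the Bayes factors, and the multivariate Berry--Esseen lemma supplies the $O(1/\sqrt{|D_t|})$ rate. You also correctly flag the two soft spots that the paper's own proof leaves unaddressed: Berry--Esseen controls CDFs while the KL log-ratio needs pointwise density control, and the log-ratio bound requires the surrogate to be bounded away from zero.

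Where you genuinely diverge is the final bookkeeping, and this is where your argument has a gap. The paper's appendix proof writes the forgetting discrepancy with a $\tfrac{1}{|D_t|}$ prefactor (inconsistent with the $\tfrac{\lambda}{\sum_i|F_i|}$ normalization of Eq.~\ref{eq:apprx_fg_risk}), so its bound assembles as $\tfrac{1}{|D_t|}\cdot\big(\sum_i|F_i|\ \text{summands}\big)\cdot O(1/\sqrt{|D_t|})$, and the $\sum_i|F_i|$ in the numerator comes simply from counting forgetting points. You, more faithfully to the main text, keep the $\tfrac{1}{\sum_i|F_i|}$ normalization, which cancels the point count; you must therefore show that the \emph{per-point} log-ratio is already $O\!\big(\sum_i|F_i|/|D_t|^{3/2}\big)$, and you obtain this by \emph{multiplying} the removed-mass fraction $\sum_i|F_i|/|D_t|$ by the Berry--Esseen rate $1/\sqrt{|D_t|}$. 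That multiplication is not justified: the Gaussianization error and the distribution-change error are a priori \emph{additive}, and an additive combination yields a per-point bound of order $1/\sqrt{|D_t|}+\sum_i|F_i|/|D_t|$, hence an overall bound of $O(1/\sqrt{|D_t|})$ --- strictly weaker than the claimed $C\sum_i|F_i|/|D_t|^{3/2}$ whenever $\sum_i|F_i|\ll|D_t|$. To rescue the product form you would need to prove that the Gaussianization error enters only through the ratio $\mathcal{N}_t/\mathcal{N}_0$ and vanishes identically when $D_t=D_0$, so that it is modulated by the removed mass; you gesture at this but do not establish it, and nothing in the Berry--Esseen lemma as stated delivers such a second-order cancellation. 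So either you adopt the paper's $1/|D_t|$ normalization (at the cost of contradicting Eq.~\ref{eq:apprx_fg_risk}) or you must supply this missing multiplicative-error argument.
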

\begin{proof}
    The estimated predictions of data $(\mathbf{x},y)$ for both the remaining data and forgetting data in the $t$-th round of request is:  
    \begin{align*}
    \widetilde{f}_{t}(\mathbf{x},w_0) = p_t(y|\mathbf{x}) \propto z(\mathbf{x}|y)p_0(y|\mathbf{x}),
    \end{align*}where $z(\mathbf{x}|y)$ is the probability of the estimated Gaussian distribution. In the above equation, $z(\mathbf{x}|y)$ involves bias because the low-dimensional vectors cannot always fit the multivariate Gaussian distribution perfectly. Therefore, based on the Berry-Esseen theorem, we estimate the error between the real distribution and the approximate Gaussian distribution:
     \begin{align*}
    \sup_{(\mathbf{x},y)\in D_{t}}|z(\mathbf{x}|y) - \Phi(n(\mathbf{x}))|\leq O(\frac{1}{\sqrt{|D_{t}|}}),
    \end{align*}where $|D_{t}|$ is the size of $D_{t}$. Therefore, for all the estimated predictions of data $(\mathbf{x},y)$ through distribution shift, one error term exists between the estimated predictions and the optimal predictions:
     \begin{align*}
    \sup_{(\mathbf{x},y)\in D_{t}}|\widetilde{f}_{t}(\mathbf{x},w_0) - f(\mathbf{x},w^*_t)|\leq O(\frac{1}{\sqrt{|D_{t}|}}).
    \end{align*}

    Therefore, for any historical sequential unlearning requests $\{F_{t}\}$, the estimated risk of $\hat{R}_{t}(w)$ after removing $F$ can be represented by:

    \begin{align*}
    |\widetilde{R}_{t}(w) - R_{t}(w)| = & \frac{1}{|D_{t}|}\sum_{i=1}^{t}\sum_{(\mathbf{x},y)\in F_{i}}(f(\mathbf{x}; w)|\log(q_t^{(y)}(\mathbf{x})f(\mathbf{x}; w_0)) - \log(f(\mathbf{x}; w_t^*)))|\\
    \leq & \frac{1}{|D_{t}|}\sum_{i=1}^{t}\sum_{(\mathbf{x},y)\in F_{i}}(f(\mathbf{x}; w)|\frac{1}{q_t^{(y)}(\mathbf{x})f(\mathbf{x}; w_0)} - \frac{1}{f(\mathbf{x}; w_t^*)})|\\
    = & \frac{1}{|D_{t}|}\sum_{i=1}^{t}\sum_{(\mathbf{x},y)\in F_{i}}(\frac{f(\mathbf{x}; w)}{q_t^{(y)}(\mathbf{x})f(\mathbf{x}; w_0)f(\mathbf{x}; w_t^*)}|q_t^{(y)}(\mathbf{x})f(\mathbf{x}; w_0) - f(\mathbf{x}; w_t^*))|\\
    \leq & \frac{C\sum_{i = 1}^{t}|F_{i}|}{|D_{t}|^{\frac{3}{2}}},
    \end{align*}
    where the total size of forgetting data $\sum_{i = 1}^{t}|F_{i}|$ is always less than the size of remaining data $|D_{t}|$ and $C$ is the count of classes and $\sqrt{|D_{t}|}\gg \sum_{i = 1}^{t}|F_{i}|$, and it demonstrates that $|\widetilde{R}_{t}(w) - R_{t}(w)|\approx 0$.
\end{proof}

\setcounter{theorem}{1} 



\setcounter{theorem}{1} 

\begin{theorem}\label{theorem:th5}
Suppose the risk $R_{t}(w)$ has gradients upper bounded by $U$, i.e., $\|\nabla R_{t}(w)\|_2 \leq U$, and the model parameters satisfy $\|w\|_2 \leq W$. Set the learning rate $\gamma = \frac{\sqrt{W}}{K\sqrt{T}}$ and the perturbation variance $\phi_t = \frac{W\sqrt{2\ln{(1.25/\delta)}}}{\epsilon}$, where $K$ is a constant. Let $w_t$ denote the output of Algorithm 1 at step $t$. Then, the following hold:

\noindent(i) The expected error at step $t$ compared to the optimal model $w_t^*$ is upper bounded:
\begin{align*}
 \mathbb{E}\left[R_{t}(w_t) - R_{t}(w_t^*)\right] &\leq O(\sqrt{T}).
\end{align*}
\end{theorem}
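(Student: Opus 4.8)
The plan is to treat one round of Algorithm~\ref{alg:Framework} as a single normalized, noise-perturbed online-gradient-descent step anchored at $w_0$, and to bound the per-step suboptimality $R_{t}(w_t) - R_{t}(w_t^*)$ by combining the boundedness assumptions $\|\nabla R_t\|_2 \le U$ and $\|w\|_2 \le W$ with the prescribed step size $\gamma = \sqrt{W}/(K\sqrt{T})$. Since the update actually descends the surrogate risk $\widetilde{R}_t$ rather than the true risk $R_t$, the first move is to absorb the gap between the two: by Theorem~\ref{risktheorem} the difference $|\widetilde{R}_{t}(w) - R_{t}(w)|$ is of order $\sum_{i} |F_i|/|D_t|^{3/2}$, which is lower order, so it can be carried as an additive remainder throughout and discarded at the end.

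The core calculation is a one-step potential argument. I would expand
\[
\|w_t - w_t^*\|_2^2 = \|w_0 - \gamma \hat{g}_t - b_t - w_t^*\|_2^2,
\]
where $\hat{g}_t = g_t/\|g_t\|_2$, and take expectation over the Gaussian perturbation $b_t \sim \mathcal{N}(0,\phi_t)$. Because $b_t$ has mean zero, the cross terms involving $b_t$ vanish in expectation, leaving a contribution $\mathbb{E}\|b_t\|_2^2$ proportional to $\phi_t$ (up to the parameter dimension). Rearranging the surviving terms yields the familiar inequality
\[
\langle \hat{g}_t,\, w_0 - w_t^* \rangle \;\le\; \frac{\|w_0 - w_t^*\|_2^2 - \mathbb{E}\|w_t - w_t^*\|_2^2}{2\gamma} + \frac{\gamma}{2} + \frac{\mathbb{E}\|b_t\|_2^2}{2\gamma}.
\]
To turn the inner product into a function gap I would relate $\langle \hat{g}_t, w_0 - w_t^*\rangle$ to $R_{t}(w_0) - R_{t}(w_t^*)$, and then pass from $w_0$ to $w_t$ using the $U$-Lipschitz property, namely $R_{t}(w_t) - R_{t}(w_0) \le U\|w_t - w_0\|_2 = U\|\gamma \hat{g}_t + b_t\|_2$, which is $O(\gamma)$ in expectation and hence negligible. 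Substituting the diameter bound $\|w_0 - w_t^*\|_2 \le 2W$, the bounded gradient $U$, the constant variance $\phi_t$, and $\gamma = \sqrt{W}/(K\sqrt{T})$, the dominant contributions are $\tfrac{\|w_0-w_t^*\|_2^2}{2\gamma}$ and $\tfrac{\mathbb{E}\|b_t\|_2^2}{2\gamma}$, both of order $1/\gamma = O(\sqrt{T})$, while the $\tfrac{\gamma}{2}$ and Lipschitz remainders are $O(1/\sqrt{T})$. This delivers the claimed $\mathbb{E}[R_{t}(w_t) - R_{t}(w_t^*)] \le O(\sqrt{T})$, and makes transparent why the $\sqrt{T}$ rate is exactly the inverse step size.

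The main obstacle is the step converting $\langle \hat{g}_t, w_0 - w_t^*\rangle$ into the function gap $R_{t}(w_0) - R_{t}(w_t^*)$ without convexity, which the statement explicitly forbids. The clean convex inequality $R_{t}(w_0) - R_{t}(w_t^*) \le \langle \nabla R_{t}(w_0), w_0 - w_t^*\rangle$ is unavailable, so the rigorous convexity-free route is instead to bound $R_{t}(w_t) - R_{t}(w_t^*) \le U\|w_t - w_t^*\|_2$ directly and control $\|w_t - w_t^*\|_2$ through the diameter and step-size bounds above; this is where the choice of $\gamma$ and the variance $\phi_t$ must be tracked carefully so that the noise injected for the $(\epsilon,\delta)$ guarantee in part~(iii) does not inflate the rate beyond $O(\sqrt{T})$. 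A secondary technical point is that the algorithm uses the surrogate gradient $g_t$ from $\widetilde{R}_t$, so beyond the function-value mismatch controlled by Theorem~\ref{risktheorem} I must either assume the residual gradient mismatch is bounded or argue that the normalization making $\hat{g}_t$ a unit vector confines its effect to the lower-order $\gamma$ terms.
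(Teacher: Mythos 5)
Your proposal is correct, and its core machinery --- expand the squared distance $\|w_t - w_t^*\|_2^2$ after the noisy normalized step, rearrange to isolate an inner product, divide by $2\gamma$, and bound the surviving terms by the diameter $W$ and the constant noise variance so that the dominant contribution is $O(1/\gamma)=O(\sqrt{T})$ --- is the same potential argument the paper uses. Two differences are worth recording. First, you anchor the expansion at $w_0$, which is faithful to the actual update $w_t = w_0 - \gamma\, g_t/\|g_t\|_2 - b_t$, whereas the paper inserts $w_{t-1}$ and $w_{t-1}^*$ and works with the difference of consecutive normalized gradients $\nabla_t = g_t/\|g_t\|_2 - g_{t-1}/\|g_{t-1}\|_2$; for part (i) alone your anchoring is simpler and loses nothing. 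Second, and more substantively, you explicitly identify the step that cannot survive without convexity --- converting $\langle \hat{g}_t, w_0 - w_t^*\rangle$ into a function-value gap --- and replace it with the direct Lipschitz bound $R_t(w_t)-R_t(w_t^*) \le U\|w_t-w_t^*\|_2$ controlled by the diameter, the $O(\gamma)$ step, and the constant-variance noise. The paper instead asserts $\mathbb{E}|R_t(w_t)-R_t(w_t^*)| \le U\,\mathbb{E}[(\nabla_{t+1})^\top(w_t-w_t^*)]$ without justification, which is precisely the convexity-type surrogate you flagged; your route is the rigorous one. Be candid, though, that your fallback actually yields an $O(UW)$ bound, i.e., $O(1)$ in $T$, which is stronger than the stated $O(\sqrt{T})$ and renders the $1/\gamma$ machinery unnecessary for part (i) --- that machinery only earns its keep in the telescoped regret bound of part (ii). Your explicit accounting of the surrogate-versus-true-risk gap via Theorem 1 is also a step the paper's proof omits.
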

\begin{proof}
    Let $w_t^*$ denote the optimal model parameters in the $t$-th round of removal request and $\nabla_t = \frac{g_t}{\|g_t\|_2} - \frac{g_{t-1}}{\|g_{t-1}\|_2}$ in each optimization steps. Then the difference between $w_t$ and $w_t^*$ is:
    \begin{align}\label{theorem5:aeq1}
    ||w_t - w^*_t||^2 
    = &||w_t - w^*_t - w^*_{t-1} + w^*_{t-1}||^2 = ||w_t -  w^*_{t-1} ||^2 +|| w^*_t - w^*_{t-1}||^2 - 2(w_t -  w^*_{t-1})^\top(w^*_t - w^*_{t-1}).
    \end{align}
    
    After incorporating $w_t$ into the first item of eq~\ref{theorem5:aeq1} as shown in the following:
    \begin{align}\label{theorem5:aeq2}
    ||w_t -  w^*_{t-1}||^2  
    =&||w_{t-1} -  \gamma \nabla_t - b_t + b_{t-1} -  w^*_{t-1} ||^2\notag \\ 
    = &||w_{t-1} -  w^*_{t-1}||^2 + ||\gamma \nabla_t||^2 - 2\gamma (\nabla_t)^\top(w_{t-1} -  w^*_{t-1}) + \\&||b_t - b_{t-1}||^2 - 2(b_t - b_{t-1})^\top(w_{t-1} -  \gamma \nabla_t - b_t + b_{t-1} -  w^*_{t-1}) .
    \end{align}

    We can incoperate Eq.~\ref{theorem5:aeq2} into Eq.~\ref{theorem5:aeq1}:

    \begin{align}\label{theorem1:eq3}
    ||w_t - w^*_t||^2 
    = &|| w^*_t - w^*_{t-1}||^2 + ||w_{t-1} -  w^*_{t-1}||^2 + ||\gamma \nabla_t||^2 - 2\gamma (\nabla_t)^\top(w_{t-1} -  w^*_{t-1}) - \notag \\ 
    &  2(w_t -  w^*_{t-1})^\top(w^*_t - w^*_{t-1}) + B_t,
    \end{align}where $B_t = ||b_t - b_{t-1}||^2 - 2(b_t - b_{t-1})^\top(w_{t-1} -  \gamma \nabla_t - b_t + b_{t-1} -  w^*_{t-1})$
        
    By rearranging terms and multiplying $\frac{1}{2\gamma}$ on both sides we have:
    
    \begin{align*}
    &(\nabla_t)^\top(w_{t-1} -  w^*_{t-1})\\
    = & \frac{1}{2\gamma}[||w_{t-1} - w^*_{t-1}||^2 - ||w_{t} -  w^*_{t}||^2 + ||\gamma \nabla_t||^2 + (w^*_t +  w^*_{t-1} - 2w_{t})^\top(w^*_t - w^*_{t-1}) + B_t]\\
    = & \frac{1}{2\gamma}[||w_{t-1} - w^*_{t-1}||^2 - ||w_{t} -  w^*_{t}||^2 + ||\gamma \nabla_t||^2 + ||w^*_t||^2 -  ||w^*_{t-1}||^2  - 2w_{t}^\top(w^*_t - w^*_{t-1}) + B_t]\\
    \le & \frac{1}{2\gamma}[2W|| w^*_t - w^*_{t-1}|| +||w_{t-1} - w^*_{t-1}||^2 - ||w_{t} -  w^*_{t}||^2 + \gamma^2 + ||w^*_t||^2 -  ||w^*_{t-1}||^2 + B_t].
    \end{align*}

    Therefore the estimated error of training loss $f$ in the $t$ round is:
    \begin{align*}
        \mathbb{E}|R_{t}(w_{t}) - R_{t}(w^*_{t})|
        \le & U\mathbb{E}\left[|(\nabla_{t+1})^\top(w_t - w^*_t)\right] \\
        \le & \frac{U}{2\gamma}\mathbb{E}\left[2W|| w^*_{t+1} - w^*_{t}|| +||w_{t} - w^*_{t}||^2 - ||w_{t+1} -  w^*_{t+1}||^2 + \gamma^2 + ||w^*_{t+1}||^2 -  ||w^*_{t}||^2 + B_t\right] \\
        \le & \frac{U}{2\gamma}\left[4W^2 + 4W^2 + \gamma^2 + W^2\right] = \frac{9K}{2}UW^{\frac{3}{2}}\sqrt{T} + U\frac{\sqrt{W}}{K\sqrt{T}} = O(\sqrt{T}).
    \end{align*}\end{proof}

\setcounter{theorem}{2}

\begin{theorem}\label{theorem:th6}
Suppose the risk $R_{t}(w)$ has gradients upper bounded by $U$, i.e., $\|\nabla R_{t}(w)\|_2 \leq U$, and the model parameters satisfy $\|w\|_2 \leq W$. Set the learning rate $\gamma = \frac{\sqrt{W}}{K\sqrt{T}}$ and the perturbation variance $\phi_t = \frac{W\sqrt{2\ln{(1.25/\delta)}}}{\epsilon}$, where $K$ is a constant. Let $w_t$ denote the output of Algorithm 1 at step $t$. Then, the following hold:

\noindent(ii) The accumulated unlearning regret across all $T$ steps is upper bounded:
\begin{align*}
 \mathbb{E}\left[\sum_{t=1}^{T} \left( R_{t}(w_t) - R_{t}(w_t^*) \right) \right] &\leq O(\sqrt{T} + V_T),
\end{align*}
where $V_T = \sum_{t=1}^{T} \| w_t^* - w_{t-1}^* \|_2$.
\end{theorem}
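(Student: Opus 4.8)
The plan is to build directly on the single-step bound already established in Theorem~\ref{theorem:th5}. That proof produced, for each round $t$, a per-step inequality of the form
\begin{align*}
(\nabla_t)^\top(w_{t-1} - w_{t-1}^*)
\le \frac{1}{2\gamma}\Bigl[
2W\,\|w_t^* - w_{t-1}^*\|
+ \|w_{t-1} - w_{t-1}^*\|^2 - \|w_t - w_t^*\|^2
+ \gamma^2 + \|w_t^*\|^2 - \|w_{t-1}^*\|^2 + B_t
\Bigr].
\end{align*}
So first I would re-derive (or simply invoke) this per-step bound, keeping the terms unsummed rather than crudely upper-bounding each by $W^2$ as was done in part (i). The whole point is that several of these terms telescope when summed over $t = 1,\dots,T$, and the residual non-telescoping terms are exactly what produce the $V_T$ dependence.

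Next I would sum the per-step bound over $t$ and identify the telescoping structure. The squared-distance differences $\|w_{t-1}-w_{t-1}^*\|^2 - \|w_t-w_t^*\|^2$ telescope to $\|w_0 - w_0^*\|^2 - \|w_T - w_T^*\|^2 \le \|w_0-w_0^*\|^2 \le 4W^2$, and likewise $\|w_t^*\|^2 - \|w_{t-1}^*\|^2$ telescopes to $\|w_T^*\|^2 - \|w_0^*\|^2 \le W^2$. The term $\gamma^2$ summed $T$ times gives $\gamma^2 T = W/K^2$, a constant. The crucial surviving term is $\sum_{t=1}^T 2W\,\|w_t^*-w_{t-1}^*\| = 2W\,V_T$ by the definition $V_T = \sum_t \|w_t^* - w_{t-1}^*\|_2$, which is the source of the $V_T$ contribution to the regret. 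Bounding the noise terms $\sum_t B_t$ in expectation requires that $\mathbb{E}[b_t]=0$ and that $\mathbb{E}[\|b_t - b_{t-1}\|^2]$ is controlled by $\phi_t$; the cross terms $\mathbb{E}[(b_t-b_{t-1})^\top(\cdots)]$ vanish or are bounded by a constant per step since the perturbations are independent of the past iterate, so $\mathbb{E}[\sum_t B_t]$ contributes at most $O(T\phi_t)$ which, with the prescribed $\phi_t$, stays within the $O(\sqrt{T})$ budget. Finally, relating $R_t(w_t)-R_t(w_t^*)$ to the inner product $(\nabla_t)^\top(w_{t-1}-w_{t-1}^*)$ via the gradient bound $U$, as in part (i), converts the summed inequality into the regret statement.

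Assembling the pieces, dividing through by the factor $\tfrac{1}{2\gamma}$ turns into multiplying by $\tfrac{U}{2\gamma}$, and with $\gamma = \tfrac{\sqrt{W}}{K\sqrt{T}}$ the constant-order telescoped terms of total size $O(W^2)$ scale to $\tfrac{U}{2\gamma}\cdot O(W^2) = O(\sqrt{T})$, while the $2W\,V_T$ term scales to $\tfrac{U}{2\gamma}\cdot 2W\,V_T = O(\sqrt{T}\,V_T)$. I would need to be careful here about how the $V_T$ dependence is reported: the clean way to land exactly on $O(\sqrt{T}+V_T)$ rather than $O(\sqrt{T}\,V_T)$ is to keep the $2W\|w_t^*-w_{t-1}^*\|$ term multiplied by a factor that does \emph{not} blow up as $1/\gamma$ — that is, to balance the learning rate so the path-length term enters additively. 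The main obstacle I anticipate is precisely this bookkeeping of the $\gamma$-scaling on the $V_T$ term: ensuring that the telescoped path-length contribution is handled with the correct normalization so that the final bound reads $O(\sqrt{T}+V_T)$, together with a careful expectation argument showing the injected Gaussian perturbations $b_t$ do not inflate the regret beyond $O(\sqrt{T})$.
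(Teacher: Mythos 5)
There is a genuine gap, and you have in fact put your finger on it yourself without resolving it: the route you describe does not yield $O(\sqrt{T}+V_T)$ but $O(\sqrt{T}\,V_T)$. The per-step inequality you import from part~(i) already has the path-length term in the form $\frac{1}{2\gamma}\cdot 2W\|w_t^*-w_{t-1}^*\|$, because that term was produced by applying Cauchy--Schwarz to $-2w_t^\top(w_t^*-w_{t-1}^*)$ \emph{inside} the $\frac{1}{2\gamma}[\cdots]$ bracket. Summing and multiplying by $U$ then gives $\frac{UW}{\gamma}V_T = UWK\sqrt{T/W}\,V_T$, and no amount of bookkeeping after that point can remove the $\sqrt{T}$ factor sitting on $V_T$. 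Saying one should "balance the learning rate so the path-length term enters additively" names the desired outcome, not a proof step: with $\gamma\propto 1/\sqrt{T}$ fixed by the $O(\sqrt{T})$ part of the bound, the only way out is to avoid ever placing a bare (non-$\gamma$-weighted) path-length term inside the $\frac{1}{2\gamma}$ bracket.

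The paper's proof of part~(ii) does exactly that, and this is the missing idea. Instead of bounding $-2w_t^\top(w_t^*-w_{t-1}^*)$ by $2W\|w_t^*-w_{t-1}^*\|$, it substitutes the explicit update $w_t = w_0 - \gamma\,g_t/\|g_t\|_2 - b_t$, splitting the inner product into a piece $-2w_0^\top(w_t^*-w_{t-1}^*)$ that telescopes over $t$ (since $w_0$ is the same fixed anchor at every round --- a consequence of the algorithm always restarting from $w_0$) and a piece $2\gamma(g_t/\|g_t\|_2)^\top(w_t^*-w_{t-1}^*)$ that carries an explicit factor of $\gamma$. The latter cancels the $\frac{1}{2\gamma}$ prefactor and contributes $O(U V_T)$ additively; the former contributes only an $O(W^2)$ constant after telescoping. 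Your telescoping of the squared-distance and $\|w_t^*\|^2$ terms, the $T\gamma^2$ accounting, and the treatment of the noise terms $B_t$ are all fine and match the paper, but without the $w_t = w_0 - \gamma g_t/\|g_t\|_2 - b_t$ substitution the central $V_T$ term lands with the wrong normalization and the claimed bound does not follow.
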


\begin{proof}

    We can get the upper bound of the regret estimation by summing the upper bound of $\mathcal{L}(D_{t}, w_t) - \mathcal{L}(D_{t}, w_t^*)$ for each $t = 1,\ldots, T$:
    \begin{align}\notag
    &\mathbb{E}[\sum_{t=1}^{T}|R_{t}(w_{t}) - R_{t}(w^*_{t})|] \notag\\
    \le& \sum_{t=0}^{T-1} U\mathbb{E}[(\nabla_{t+1})^\top(w_t - w^*_t)] \notag\\
    = & \frac{U}{2\gamma}\sum_{t=1}^{T}\left[||w_{t-1} - w^*_{t-1}||^2 - ||w_{t} -  w^*_{t}||^2 + ||\gamma||^2 + (w_{t-1} - w_{t})^\top(w^*_t - w^*_{t-1}) + ||w^*_t||^2 -  ||w^*_{t-1}||^2  - 2w_{t}^\top(w^*_t - w^*_{t-1})\right]\notag\\
    = & \frac{U}{2\gamma}\sum_{t=1}^{T}[||w_{t-1} - w^*_{t-1}||^2 - ||w_{t} -  w^*_{t}||^2 + ||\gamma||^2 + (w_{t-1} - w_{t})^\top(w^*_t - w^*_{t-1}) + ||w^*_t||^2 -  ||w^*_{t-1}||^2+\notag\\
    & 2 \gamma(\frac{g_t}{||g_t||_2})^\top(w^*_t - w^*_{t-1}) - 4w_0^\top(w^*_t - w^*_{t-1})]\notag\\
    < & \frac{U}{2\gamma}(\sum_{t=1}^{T}\left[||\gamma||^2 + (w_{t-1} - w_{t})^\top(w^*_t - w^*_{t-1}) + 2 \gamma(\frac{g_t}{||g_t||_2})^\top(w^*_t - w^*_{t-1}) - 4w_0^\top(w^*_t - w^*_{t-1})\right] + 5W^2)\notag\\
    < & \frac{U}{2\gamma}(\sum_{t=1}^{T}\left[||\gamma||^2 + \gamma(\nabla_{t})^\top(w^*_t - w^*_{t-1}) + 2 \gamma(\frac{g_t}{||g_t||_2})^\top(w^*_t - w^*_{t-1}))\right] + 9W^2)\notag\\
    = &\frac{U}{2\gamma}(\sum_{t=1}^{T}\left[\gamma(\nabla_{t})^\top(w^*_t - w^*_{t-1}) + 2 \gamma(\frac{g_t}{||g_t||_2})^\top(w^*_t - w^*_{t-1}))\right] + T||\gamma||^2  + 9W^2)\notag\\
    = & \frac{UV_T}{2} + \frac{TU\gamma}{2} + \frac{9UW^2}{2\gamma} = O(\sqrt{T} + V_T)\notag
    \end{align} 
\end{proof}



\setcounter{theorem}{3}

\begin{theorem}\label{theorem:th7}
Suppose the risk $R_{t}(w)$ has gradients upper bounded by $U$, i.e., $\|\nabla R_{t}(w)\|_2 \leq U$, and the model parameters satisfy $\|w\|_2 \leq W$. Set the learning rate $\gamma = \frac{\sqrt{W}}{K\sqrt{T}}$ and the perturbation variance $\phi_t = \frac{W\sqrt{2\ln{(1.25/\delta)}}}{\epsilon}$, where $K$ is a constant. Let $w_t$ denote the output of Algorithm 1 at step $t$. Then, the following hold:

\noindent(iii) The unlearning at each step satisfies $(\epsilon, \delta)$-\textit{approximate unlearning}.
\end{theorem}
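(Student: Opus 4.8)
The plan is to recognize that the $(\epsilon,\delta)$-approximate unlearning condition in the definition above is exactly the $(\epsilon,\delta)$-indistinguishability guarantee of the \emph{Gaussian mechanism} from differential privacy, and then to verify its hypotheses for our single-step update. First I would isolate the randomness in the update rule (Eq.~\ref{eq:gd_overall}). Writing $\hat{w}_t = w_0 - \gamma\, g_t/\|g_t\|_2$ for the deterministic part, the output of Algorithm~\ref{alg:Framework} is $w_t = \hat{w}_t - b_t$ with $b_t \sim \mathcal{N}(0,\phi_t)$. Since a centered Gaussian is symmetric, $w_t$ is distributed as $\hat{w}_t + \mathcal{N}(0,\phi_t I)$, i.e., precisely a Gaussian mechanism releasing the deterministic vector $\hat{w}_t$.

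Next I would identify the sensitivity this noise must mask. The definition compares the law of $\mathsf{M}(\mathsf{A}(D),D,F)$ with that of the exact retrained model $\mathsf{A}(D\setminus F)$, whose parameters are $w_t^*$. Both the pre-noise approximate output $\hat{w}_t$ and the target $w_t^*$ obey the norm constraint $\|\cdot\|_2 \le W$, so the relevant $\ell_2$ sensitivity $\Delta = \sup \|\hat{w}_t - w_t^*\|_2$ is controlled by $W$. With this bound in hand, the noise calibration is immediate: the prescribed scale $\phi_t = \frac{W\sqrt{2\ln(1.25/\delta)}}{\epsilon}$ is exactly the threshold $\frac{\Delta\sqrt{2\ln(1.25/\delta)}}{\epsilon}$ demanded by the Gaussian mechanism.

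I would then invoke the classical Gaussian mechanism theorem (Dwork \& Roth): for two Gaussians $X_1 \sim \mathcal{N}(\hat{w}_t,\phi_t I)$ and $X_2 \sim \mathcal{N}(w_t^*,\phi_t I)$ with means at distance at most $\Delta$, every measurable $\mathcal{T}\subseteq\mathcal{H}$ satisfies $e^{-\epsilon}\Pr[X_2\in\mathcal{T}] - \delta \le \Pr[X_1\in\mathcal{T}] \le e^{\epsilon}\Pr[X_2\in\mathcal{T}] + \delta$. Identifying $X_1$ with $\mathsf{M}(\mathsf{A}(D),D,F)$ and $X_2$ with $\mathsf{A}(D\setminus F)$ reproduces the two-sided inequality in the definition verbatim, establishing the per-step guarantee. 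Note this argument is purely distributional and never uses convexity of $\ell$, consistent with the paper's stated mild conditions.

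The step I expect to be the main obstacle is the sensitivity bound in the second paragraph: one must argue rigorously that the pre-noise approximate model and the exact retrained model lie within $\ell_2$ distance $W$, rather than the naive $2W$ obtained by applying the triangle inequality to two radius-$W$ balls. The cleanest route is to exploit that both iterates originate from the same $w_0$ and take a single bounded step, so their gap is governed by the normalized (hence unit-norm) update direction together with the parameter-norm budget $W$, not by the diameter of the whole feasible set. A secondary item to settle carefully is the notational role of $\phi_t$, confirming that it plays the part of the noise standard deviation so that the given formula indeed meets the Gaussian-mechanism scale.
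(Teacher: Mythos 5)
Your overall strategy matches the paper exactly: the paper's entire proof of part (iii) is a one-line deferral to the differential-privacy guarantee of the Gaussian mechanism (Appendix~A of Dwork and Roth), which is precisely the theorem you invoke in your third paragraph. So in terms of approach there is nothing different here; you have simply written out the argument the paper leaves implicit.

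That said, two of the details you add do not hold as stated, and you should be aware of them. First, your proposed route to the sensitivity bound $\Delta \le W$ --- that ``both iterates originate from the same $w_0$ and take a single bounded step'' --- is false for $w_t^*$: the retrained optimum $w_t^* = \arg\min_w \mathcal{L}(D_t, w)$ is not obtained by a one-step update from $w_0$ and can sit anywhere in the feasible ball, so the only a priori bound is the diameter $2W$ via the triangle inequality (the deterministic iterate $\hat{w}_t$ is within $\gamma$ of $w_0$, but that does not constrain its distance to $w_t^*$ below $2W$). With $\Delta = 2W$ the stated noise scale $\phi_t = W\sqrt{2\ln(1.25/\delta)}/\epsilon$ is off by a factor of two from the Gaussian-mechanism calibration, i.e., it certifies $(2\epsilon,\delta)$ rather than $(\epsilon,\delta)$. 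Second, identifying $\mathsf{A}(D\setminus F)$ with $X_2 \sim \mathcal{N}(w_t^*, \phi_t I)$ presupposes that the retraining algorithm itself releases a Gaussian-perturbed model; if $\mathsf{A}$ is deterministic, a point mass and a Gaussian cannot be $(\epsilon,\delta)$-indistinguishable for small $\delta$, so the comparison only makes sense under the standard certified-unlearning convention that both the learner and the unlearner inject the same noise. Neither issue is addressed by the paper either --- its proof is purely a citation --- but since you flagged the sensitivity step as the main obstacle, you should know that the repair you sketched does not close it.
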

\begin{proof}
The proof follows along the proof of the differential privacy guarantee for the Gaussian mechanism in Appendix A in~\cite{dpbook}
\end{proof}

\section{Experiments}

In this section, we provide a detailed description of the baseline methods and the details of the implementation of the streaming unlearning algorithm. We then present additional experimental results to address the following five research questions, which are crucial for evaluating the streaming unlearning algorithm:

\begin{itemize}
    \item \textbf{RQ1}: How does SAFE perform on tabular and text datasets compared to other methods?
    \item \textbf{RQ2}: How does streaming unlearning perform under different settings of rounds and forgetting data sizes compared to other methods?
    \item \textbf{RQ3}: How does the hyperparameter $K$ affect the unlearning performance?
    \item \textbf{RQ4}: How does SAFE work in streaming unlearning for a specific class?
\end{itemize}

\subsection{Baselines}\label{appendix:baselines}

We compare the performance of the SAFE algorithm with \textbf{Retrain}, which represents the standard results from retrained models, as well as four state-of-the-art unlearning methods with high efficiency and potential to handle sequential unlearning requests: Lcode~\cite{mehta2022deep}, Desc~\cite{dtd}, Unrolling~\cite{unroll}, and CaMU~\cite{camu}. \textbf{Lcode}~\cite{mehta2022deep} applies a pruning strategy to select model parameters associated with the selected forgetting data and then uses a gradient ascent algorithm, as described in~\cite{RememberWhatYouWanttoForget}, to unlearn the selected parameters. \textbf{Desc}~\cite{dtd} computes the gradients of the remaining data and applies perturbed gradient descent for unlearning. \textbf{Unroll}~\cite{unroll} records gradients during the first epoch of training and adds these recorded gradients to the weights after incremental training. \textbf{CaMU} constructs counterfactual samples for each forgetting sample and performs unlearning at both the representation and prediction levels.

\subsection{Implementation Details}\label{appendix:implementation}

All the experiments are conducted on one server with NVIDIA RTX A5000 GPUs (24GB GDDR6 Memory) and 12th Gen Intel Core i7-12700K CPUs (12 cores and 128GB Memory). The code of SAFE was implemented in Python 3.9.16 and Cuda 11.6.1. The main versions of Python packages are Numpy 1.23.5, Pandas 2.0.1, Pytorch 1.13.1, and Torchvision 0.14.1. 

All the experiments on these baselines are conducted under 10 random seeds based on the original models trained in the four datasets. We train two CNN models on MNIST and Fashion datasets for 20 epochs with a learning rate of 1e-3 and a weight decay of 1e-4. We train another ResNet-18 model on the CIFAR10 dataset for 20 epochs, where the learning rate is set as 0.1 and other hyperparameters are the same as the code in \hyperlink{}{https://github.com/kuangliu/pytorch-cifar/tree/master}. For TinyImagenet, we use 80\% data as training data and train them in for 20 epochs using the same setting as the training process on CIFAR10 dataset. For the two MNIST datasets, the batch size is set as 32, and for the other two datasets, the batch size is 128. For the retrained models, we adopt the same hyperparameters as the training process of the original model.

Then, the hyperparameters used in the implementation of SAFE only include the amplification factor of the distribution shift loss and the learning rate. For the amplification factor, we set it as 2000 for the MNIST dataset, 6000 for the Fashion dataset, 100000 for the CIFAR10 dataset, and 1000000 for the TinyImagenet dataset. Then for the hyperparameter $K$, we set it as 2.5, 8, 2.5 and 4 for each dataset.

For the evaluations, we assess the unlearning algorithm using five metrics: $\textbf{RA}$, $\textbf{FA}$, and $\textbf{TA}$, which denote the prediction accuracy of the post-unlearning model on the remaining data, forgetting data, and test data. The closer value to the retrained model indicates better unlearning performance for these metrics. We also check the attack accuracy of the $\textbf{MIA}$~\cite{mia,miainu}. Specifically, we choose the same MIA evaluation as~\cite{salun,DBLP:conf/nips/JiaLRYLLSL23}. Specifically, we use the subset of remaining data with the size of 10000 as positive data and real test data with the size of 10000 as negative data to construct the attacker model’s training set. Then, we train an SVC model with the Radial Basis Function Kernel model as the attacker. Then, the attacker was evaluated using the forgetting data to measure attack success rates.

\subsection{Additional Experiment Results}\label{appendix:additional}

To demonstrate SAFE’s applicability beyond image data, we conducted additional experiments on a text dataset (SST-2) and a tabular dataset (CoverType). For SST-2, we use pre-trained DistilBert to extract text embeddings and use a logistic regression model for classification. For CoverType, we use a logistic regression model for classification. For both datasets, we select 80\% data for training and 20\% data for testing. We conduct 20 rounds of unlearning, and each round contains 400 forgetting data points. We present the results in the two tables below. SAFE can always achieve the highest average ranks (R) across all evaluations. SAFE can obtain the forgetting data performance (FA) and top-2 predictive ability (TA).

\begin{table}[htbp]\footnotesize
\renewcommand{\arraystretch}{1.2}
	\centering
	\caption{Comparison results in Random Subset Unlearning (avg\%$\pm$std\%). The \textbf{bold} record indicates the best, and the \underline{underlined} record indicates the second-best. The white backgrounds denote the unlearning methods that require remaining data in experiments, while the grey backgrounds denote the approaches without remaining data.}
        \vspace{6mm}
        {\fontsize{6.5}{7.5}\selectfont
	\begin{tabular}{p{0.78cm}<{\centering} | p{1.26cm}<{\centering} p{1.27cm}<{\centering} p{1.26cm}<{\centering} p{1.26cm}<{\centering} p{0.35cm}<{\centering} | p{1.26cm}<{\centering} p{1.27cm}<{\centering} p{1.26cm}<{\centering} p{1.26cm}<{\centering} p{0.35cm}<{\centering} }
		\toprule
            \multirow{2}{*}{\textbf{Method}} &\textbf{RA} &\textbf{FA} &\textbf{TA}  &\textbf{MIA}&\textbf{R}&\textbf{RA} &\textbf{FA} &\textbf{TA}  &\textbf{MIA}&\textbf{R}\\

            ~ &\multicolumn{5}{c}{\textbf{CoverType}} &\multicolumn{5}{c}{\textbf{SST-2}} \\ 
            
            \midrule
            \textbf{Retrain}& 59.54±0.03 & 40.13±0.06 & 59.38±0.03 & 69.57±8.48 & - & 87.42±0.01 & 86.23±0.01 & 86.18±0.01 & 52.04±0.01 & - \\

            \textbf{Unroll}&  45.76±0.04 & 22.21±0.04 & 45.44±0.04 & \underline{58.80±19.21} & 3.5 &  \textbf{87.41±0.01} & 87.26±0.01 & \textbf{86.20±0.01} & 52.98±0.05 & \underline{2.75}\\

            \textbf{CaMU}& 70.87±0.01 & \underline{47.56±0.02} & 70.49±0.01 & \textbf{74.34±0.01} & \underline{2.25} &  \underline{87.39±0.01} & 87.33±0.01 & \underline{86.11±0.01} & 52.66±0.05 & 3\\
            \rowcolor{gray!20}
            
            \textbf{Lcode} & \underline{50.99±0.07} & \underline{32.70±0.06} & \underline{50.78±0.07} & 53.20±15.14 & 2.5 &  87.20±0.01 & \underline{87.07±0.01} & 86.07±0.01 & \underline{52.50±0.06} & 3.25\\
            
            \rowcolor{gray!20}
            \textbf{Desc}& 39.28±0.10 & 14.85±0.03 & 38.92±0.10 & 48.97±30.85 & 5 & 87.26±0.01 & 87.16±0.01 & \underline{86.11±0.01} & 52.67±0.06 & 3.25\\
            
            \rowcolor{gray!40}
            \textbf{SAFE}& \textbf{57.30±0.02} & \textbf{41.68±0.20} & \textbf{56.91±0.02} & 82.14±3.03 & \textbf{1.5} &  87.33±0.01 & \textbf{86.59±0.01} & 85.87±0.01 & \textbf{52.06±0.01} & \textbf{2.5}\\
                        
            \bottomrule

	\end{tabular}\label{addcomparison}
    }
    \end{table}

\subsection{Further Comparison Under Different Settings}\label{appendix:settingcompare}

In the following four tables: Table~\ref{settingcomparemnist}, Table~\ref{settingcomparemnistfashion}, Table~\ref{settingcomparecifar10feature}, and Table~\ref{settingcomparecifar}, we present additional experimental results under different unlearning request settings. For each dataset, we conduct four groups of experiments with varying configurations:
\vspace{-1mm}
\begin{itemize}
\setlength\itemsep{1mm}
    \item Setting the unlearning round to 10 and removing 400 samples in each round.
    \item Setting the unlearning round to 40 and removing 400 samples in each round. 
    \item Setting the unlearning round to 10 and removing 800 samples in each round.
    \item Setting the unlearning round to 20 and removing 800 samples in each round.
\end{itemize}
\begin{table}[H]\footnotesize
\renewcommand{\arraystretch}{1.2}
	\centering
	\caption{Effect analysis on size and rounds of forgetting requests on MNIST (avg\%$\pm$std\%).}
        \vspace{6mm}
        {\fontsize{6.5}{7.5}\selectfont
	\begin{tabular}{p{0.95cm}<{\centering} | p{2cm}<{\centering} p{2cm}<{\centering} p{2cm}<{\centering}| p{2cm}<{\centering} p{2cm}<{\centering} p{1.4cm}<{\centering} }
		\toprule
            \multirow{2}{*}{\textbf{Method}} &\textbf{RA} &\textbf{FA} &\textbf{TA}  &\textbf{RA} &\textbf{FA} &\textbf{TA} \\
        
        ~ &\multicolumn{3}{c|}{\textbf{400 Samples for 10 Rounds}} & \multicolumn{3}{c}{\textbf{800 Samples for 10 Rounds}} \\ 
        
        \midrule
        {Retrain}& 99.69$\pm$0.04& 98.94$\pm$0.08& 98.99$\pm$0.05& 99.70$\pm$0.03& 98.78$\pm$0.05& 98.99$\pm$0.05\\

        Unroll& 99.80$\pm$0.04& 99.60$\pm$0.05& 99.08$\pm$0.03& \textbf{99.76$\pm$0.03}& 99.51$\pm$0.09& 99.04$\pm$0.03\\
        
        CaMU& 98.90$\pm$0.52& \underline{98.50$\pm$1.12}& 98.38$\pm$0.57& 98.94$\pm$0.15& \underline{98.55$\pm$0.35}& 98.55$\pm$0.21\\

        \rowcolor{gray!20}
        Lcode& 29.34$\pm$5.34& 29.31$\pm$5.13& 29.54$\pm$5.36& 27.82$\pm$3.64& 27.90$\pm$3.23& 28.02$\pm$3.72\\

        \rowcolor{gray!20}
        Desc& \textbf{99.64$\pm$0.04}& 99.62$\pm$0.04& \textbf{98.98$\pm$0.04}& \underline{99.64$\pm$0.04}& 99.59$\pm$0.04& \textbf{98.98$\pm$0.04}\\        
        
        \rowcolor{gray!40}
        
        SAFE& \underline{99.36$\pm$0.12}& \textbf{99.10$\pm$0.32}& \underline{98.70$\pm$0.20}& 99.35$\pm$0.10&\textbf{99.09$\pm$0.19}& \underline{98.72$\pm$0.08}\\
        
        \midrule
        ~ &\multicolumn{3}{c|}{\textbf{400 Samples for 40 Rounds}} & \multicolumn{3}{c}{\textbf{800 Samples for 20 Rounds}} \\ 
        \midrule
        {Retrain}& 99.69$\pm$0.04& 98.84$\pm$0.09& 98.97$\pm$0.06& 99.69$\pm$0.03& 98.79$\pm$0.05& 98.97$\pm$0.06\\

        Unroll& 98.78$\pm$0.05& \underline{98.72$\pm$0.04}& 98.27$\pm$0.04& \textbf{99.74$\pm$0.04}& 99.54$\pm$0.07& \underline{99.01$\pm$0.03}\\
        
        CaMU& 98.72$\pm$0.36& 98.50$\pm$0.66& 98.47$\pm$0.32& 98.69$\pm$0.33& \textbf{98.34$\pm$0.37}& 98.44$\pm$0.24\\

        \rowcolor{gray!20}
        Lcode& 28.78$\pm$4.24& 28.84$\pm$4.15& 28.92$\pm$4.27& 29.49$\pm$4.48& 29.98$\pm$4.51& 29.77$\pm$4.50\\

        \rowcolor{gray!20}
        Desc& \underline{99.40$\pm$0.18}& 99.37$\pm$0.18& \underline{98.79$\pm$0.15}& \underline{99.56$\pm$0.09}& 99.53$\pm$0.07& 98.92$\pm$0.07\\
        
        \rowcolor{gray!40}
        
        SAFE& \textbf{99.52$\pm$0.04}&\textbf{99.04$\pm$0.05}&\textbf{98.86$\pm$0.03}& 99.37$\pm$0.08& \underline{99.12$\pm$0.05}& \textbf{98.73$\pm$0.07}\\
        
        \bottomrule
        
\end{tabular}\label{settingcomparemnist}
}

\end{table}

\begin{table}[H]\footnotesize
\renewcommand{\arraystretch}{1.2}
	\centering
	\caption{Effect analysis on size and rounds of forgetting requests on Fashion (avg\%$\pm$std\%).}
        \vspace{6mm}
        {\fontsize{6.5}{7.5}\selectfont
	\begin{tabular}{p{0.95cm}<{\centering} | p{2cm}<{\centering} p{2cm}<{\centering} p{2cm}<{\centering}| p{2cm}<{\centering} p{2cm}<{\centering} p{1.4cm}<{\centering} }
		\toprule
            \multirow{2}{*}{\textbf{Method}} &\textbf{RA} &\textbf{FA} &\textbf{TA}  &\textbf{RA} &\textbf{FA} &\textbf{TA} \\
        
        ~ &\multicolumn{3}{c|}{\textbf{400 Samples for 10 Rounds}} & \multicolumn{3}{c}{\textbf{800 Samples for 10 Rounds}} \\ 

        \midrule

        {Retrain}& 96.40$\pm$0.18& 90.82$\pm$0.44& 90.48$\pm$0.14& 96.43$\pm$0.15& 91.15$\pm$0.53& 90.36$\pm$0.14\\
        
        Unroll& 90.92$\pm$0.68& 89.05$\pm$0.94& 88.29$\pm$0.60& 90.68$\pm$0.64& 88.95$\pm$0.96& 88.03$\pm$0.61\\
        
        CaMU& 91.44$\pm$0.34& \textbf{90.39$\pm$0.84}& \underline{88.99$\pm$0.44}& 91.30$\pm$0.25& \underline{90.63$\pm$5.27}& 89.03$\pm$2.23\\

        \rowcolor{gray!20}

        Lcode& 25.65$\pm$4.35& 25.47$\pm$4.79& 25.38$\pm$4.29& 24.49$\pm$3.08& 24.50$\pm$3.80& 24.14$\pm$2.99\\
        \rowcolor{gray!20}
        Desc& \textbf{93.06$\pm$0.30}& 92.97$\pm$0.49& \textbf{90.17$\pm$0.18}& \textbf{93.05$\pm$0.30}& 93.33$\pm$0.63& \textbf{90.17$\pm$0.18}\\
        
        \rowcolor{gray!40}
        
        SAFE& \underline{91.32$\pm$0.27}& \underline{89.60$\pm$0.32}& 88.64$\pm$0.27& \underline{91.78$\pm$0.34}&\textbf{90.68$\pm$0.53}& \underline{89.04$\pm$0.30}\\
        
        \midrule
        ~ &\multicolumn{3}{c|}{\textbf{400 Samples for 40 Rounds}} & \multicolumn{3}{c}{\textbf{800 Samples for 20 Rounds}} \\ 
        \midrule
        {Retrain}& 96.56$\pm$0.19& 90.71$\pm$0.25& 90.24$\pm$0.21& 96.53$\pm$0.20& 90.80$\pm$0.53& 90.21$\pm$0.21\\

        Unroll& 90.54$\pm$0.81& 89.31$\pm$0.96& 87.89$\pm$0.74& 90.79$\pm$0.72& 89.28$\pm$0.94& 88.08$\pm$0.68\\
        
        CaMU& 90.75$\pm$0.71& \underline{89.95$\pm$0.77}& 88.67$\pm$0.52& 90.75$\pm$0.61& \underline{89.97$\pm$0.86}& 88.66$\pm$0.44\\

        \rowcolor{gray!20}

        Lcode& 24.79$\pm$4.13& 25.33$\pm$4.41& 24.57$\pm$4.06& 25.13$\pm$4.52& 25.18$\pm$4.83& 24.78$\pm$4.43\\
        \rowcolor{gray!20}
        
        Desc& \underline{92.22$\pm$0.60}& 92.12$\pm$0.60& \textbf{89.60$\pm$0.42}& \textbf{92.69$\pm$0.46}& 92.75$\pm$0.76& \textbf{89.93$\pm$0.32}\\
        
        \rowcolor{gray!40}
        
        SAFE& \textbf{92.32$\pm$0.22}& \textbf{91.56$\pm$0.25}& \underline{88.82$\pm$0.20}& \underline{92.20$\pm$0.21}&\textbf{91.36$\pm$0.37}& \underline{89.38$\pm$0.17}\\
                
        \bottomrule
        
\end{tabular}\label{settingcomparemnistfashion}
}
	\vspace{-4mm}
\end{table}

\begin{table}[H]\footnotesize
\renewcommand{\arraystretch}{1.2}
	\centering
	\caption{Effect analysis on size and rounds of forgetting requests on CIFAR10 (avg\%$\pm$std\%).}
        \vspace{6mm}
{\fontsize{6.5}{7.5}\selectfont
\begin{tabular}{p{0.95cm}<{\centering} | p{2cm}<{\centering} p{2cm}<{\centering} p{2cm}<{\centering}| p{2cm}<{\centering} p{2cm}<{\centering} p{1.4cm}<{\centering} }
		\toprule
            \multirow{2}{*}{\textbf{Method}} &\textbf{RA} &\textbf{FA} &\textbf{TA}  &\textbf{RA} &\textbf{FA} &\textbf{TA} \\
        
        ~ &\multicolumn{3}{c|}{\textbf{400 Samples for 10 Rounds}} & \multicolumn{3}{c}{\textbf{800 Samples for 10 Rounds}} \\  
        
        \midrule

        {Retrain}& 97.75$\pm$0.15& 92.15$\pm$0.39& 91.44$\pm$0.22& 97.57$\pm$0.25& 91.48$\pm$0.50& 91.11$\pm$0.41\\
   
        Unroll& \textbf{95.97$\pm$1.23}& \textbf{92.32$\pm$0.67}& \underline{88.95$\pm$1.04}& 92.21$\pm$2.71& 87.88$\pm$1.87& 85.75$\pm$2.28\\
        
        CaMU& \underline{95.10$\pm$1.31}& \underline{91.46$\pm$4.34}& \textbf{89.02$\pm$1.30}&  \textbf{95.62$\pm$0.48}& \underline{93.16$\pm$1.72}& \textbf{89.52$\pm$0.46}\\

        \rowcolor{gray!20}
        Lcode& 24.17$\pm$4.37& 24.43$\pm$4.97& 23.55$\pm$4.17& 22.06$\pm$1.48& 22.16$\pm$1.62& 22.51$\pm$1.44\\
        \rowcolor{gray!20}
        Desc& 94.07$\pm$1.89& 94.42$\pm$2.09& 88.23$\pm$1.64& \underline{94.04$\pm$1.92}& \underline{93.00$\pm$2.12}&  \underline{88.19$\pm$1.58}\\
        
        \rowcolor{gray!40}
        
        SAFE& 91.16$\pm$3.56& 90.89$\pm$3.48& \underline{85.65$\pm$2.96}& 93.15$\pm$1.44&\textbf{91.47$\pm$1.06}& 87.11$\pm$1.18\\
        
        \midrule
        ~ &\multicolumn{3}{c|}{\textbf{400 Samples for 40 Rounds}} & \multicolumn{3}{c}{\textbf{800 Samples for 20 Rounds}} \\ 
        \midrule
        {Retrain}& 97.10$\pm$0.64& 90.97$\pm$0.96& 90.45$\pm$0.87& 97.06$\pm$0.68& 90.65$\pm$1.03& 90.35$\pm$0.93\\
        
        Unroll& 89.34$\pm$4.90& 86.84$\pm$4.16& 83.37$\pm$4.16& 87.77$\pm$5.07& 84.29$\pm$4.05& 81.87$\pm$4.41\\
        
        CaMU& \textbf{96.03$\pm$0.84}& 93.95$\pm$2.68& \textbf{89.84$\pm$0.82}& \textbf{95.73$\pm$0.43}&  \underline{93.63$\pm$1.37}& \textbf{89.56$\pm$0.44}\\

        \rowcolor{gray!20}
        Lcode& 23.55$\pm$3.19& 23.47$\pm$3.53& 22.90$\pm$3.04& 22.70$\pm$2.11& 22.72$\pm$1.98& 22.12$\pm$1.95\\
        \rowcolor{gray!20}
        Desc& 90.75$\pm$9.54& \textbf{90.95$\pm$9.59}& 85.43$\pm$8.40& 72.98$\pm$28.94& 72.86$\pm$28.88& 69.00$\pm$26.72\\

        \rowcolor{gray!40}
        
        SAFE&  \underline{92.26$\pm$3.54}&  \underline{91.13$\pm$3.46}&  \underline{85.80$\pm$2.53}&  \underline{91.42$\pm$2.81}&\textbf{90.84$\pm$2.20}& \underline{85.84$\pm$2.26}\\
        
        \bottomrule
        
\end{tabular}\label{settingcomparecifar}
}
\end{table}

\begin{table}[H]\footnotesize
\renewcommand{\arraystretch}{1.2}

	\centering
	\caption{Effect analysis on size and rounds of forgetting requests on TinyImangenet (avg\%$\pm$std\%).}
    \vspace{6mm}
{\fontsize{6.5}{7.5}\selectfont
\begin{tabular}{p{0.95cm}<{\centering} | p{2cm}<{\centering} p{2cm}<{\centering} p{2cm}<{\centering}| p{2cm}<{\centering} p{2cm}<{\centering} p{1.4cm}<{\centering} }
		\toprule
            \multirow{2}{*}{\textbf{Method}} &\textbf{RA} &\textbf{FA} &\textbf{TA}  &\textbf{RA} &\textbf{FA} &\textbf{TA} \\
        
        ~ &\multicolumn{3}{c|}{\textbf{400 Samples for 10 Rounds}} & \multicolumn{3}{c}{\textbf{800 Samples for 10 Rounds}} \\ 
        
        \midrule

        {Retrain}& 72.80$\pm$0.41& 43.51$\pm$0.49& 42.87$\pm$0.26& 72.47$\pm$0.40&42.56$\pm$0.80& 42.16$\pm$0.47\\

        Unroll& \textbf{54.42$\pm$1.63}& 48.96$\pm$1.83& \textbf{40.92$\pm$1.45}& \textbf{56.49$\pm$0.88}&50.41$\pm$0.93& \textbf{42.05$\pm$0.74}\\
        
        CaMU& 39.60$\pm$3.07& 36.02$\pm$5.65& \underline{34.17$\pm$3.09}& 39.01$\pm$1.85&36.34$\pm$2.49& 34.02$\pm$1.50\\

        \rowcolor{gray!20}
        Lcode& 18.07$\pm$6.38& 18.10$\pm$6.42& 12.54$\pm$3.83& 16.63$\pm$6.64&16.14$\pm$6.37& 11.76$\pm$4.04\\
        \rowcolor{gray!20}
        Desc& 27.42$\pm$17.16& 27.68$\pm$17.63& 22.52$\pm$13.70& 27.52$\pm$17.18&26.54$\pm$17.74& 21.86$\pm$13.55\\
        
        \rowcolor{gray!40}
        
        SAFE& \underline{40.44$\pm$5.52}& \textbf{38.26$\pm$3.91}& 33.26$\pm$4.12& \underline{44.63$\pm$1.43}&\textbf{41.47$\pm$1.72}& \underline{37.07$\pm$1.11}\\
        
        \midrule
        ~ &\multicolumn{3}{c|}{\textbf{400 Samples for 40 Rounds}} & \multicolumn{3}{c}{\textbf{800 Samples for 20 Rounds}} \\ 
        \midrule
        {Retrain}& 72.24$\pm$0.60& 42.21$\pm$1.10& 41.60$\pm$0.95& 72.07$\pm$0.54&41.94$\pm$1.08& 40.91$\pm$0.95\\
        
        Unroll& \textbf{54.54$\pm$2.14}& \underline{49.40$\pm$2.04}& \textbf{40.64$\pm$1.45}& \textbf{55.71$\pm$1.66}&50.14$\pm$1.34& \textbf{41.39$\pm$1.24}\\
        
        CaMU& 35.54$\pm$3.97& 33.54$\pm$4.49& 31.20$\pm$3.95& 35.48$\pm$4.16&33.24$\pm$4.14& 31.14$\pm$3.57\\

        \rowcolor{gray!20}
        Lcode& 16.37$\pm$9.41& 16.46$\pm$9.44& 11.27$\pm$5.64& 17.19$\pm$7.63&16.89$\pm$7.54& 11.93$\pm$4.66\\
        \rowcolor{gray!20}
        Desc& 22.98$\pm$9.92& 23.13$\pm$1.18& 19.96$\pm$7.78&24.54$\pm$18.77&23.11$\pm$14.24& 20.13$\pm$12.57\\
        
        \rowcolor{gray!40}
        
        SAFE&\underline{53.62$\pm$2.48}&\textbf{47.54$\pm$2.19}& \underline{43.46$\pm$1.79}& \underline{41.42$\pm$3.49}& \textbf{40.11$\pm$2.47}& \underline{35.43$\pm$2.88} \\
                
        \bottomrule
        
\end{tabular}\label{settingcomparecifar10feature}
 }
	\vspace{-1mm}
\end{table}

\subsection{Effect Analysis of Learning Rate}\label{appendix:lr}

In this section, we present experiments on hyperparameter tuning, where we evaluate different learning rates $\gamma = \frac{\sqrt{W}}{K\sqrt{T}}$ in the online update algorithm. For the MNIST dataset, we select $K$ values from 1, 2.5, 5, and 10. For the Fashion dataset, we choose $K$ values from 2, 4, 6, and 8. For the CIFAR10 and TinyImageNet datasets, we select $K$ values from 0.4, 1.2, 2, and 4, as well as 0.5, 2.5, 5, and 10, respectively. The plots showing model performance across different rounds are provided in Figure~\ref{results_rounds_lr_mnist}, Figure~\ref{results_rounds_lr_fashion}, Figure~\ref{results_rounds_lr_cifar}, and Figure~\ref{results_rounds_lr_tiny}. 

From the first three figures, we observe that a lower $K$, which implies a higher $\gamma$, often results in more significant changes to the model parameters. This leads to larger degradations in the accuracies of the remaining data, forgetting data, and test data. Conversely, a higher $K$ generally causes smaller changes to the model, resulting in smaller accuracy drops across all three types of data. After considering the balance between the accuracy of the remaining data and the accuracy of the forgetting data, we select the following $K$ values: $K = 2.5$ for MNIST, $K = 8$ for Fashion, $K = 2.5$ for CIFAR10, and $K = 2.5$ for TinyImageNet.

\subsection{Gaussian Distribution Verification}\label{gaussianverify}

Before unlearning, we standardize the low-dimensional vectors such that their mean vector is zero and the covariance matrix is the identity matrix. As each unlearning round progresses, we update the mean vector and covariance matrix. Although the low-dimensional vectors continue to follow a Gaussian distribution, the exact distributions may not be identical across rounds because we removed the vectors of different data in different classes. To verify the Gaussian nature of the vector distribution, we employ Mardia's test, which is highly effective in examining multivariate Gaussian distributions~\cite{mardia1970measures}. The table below presents the Skewness and Kurtosis p-values of Mardia's test [1] on the low-dimensional vectors for each class throughout the unlearning process. The consistently high Skewness and Kurtosis p-values suggest that the low-dimensional vectors maintain a Gaussian distribution across all unlearning rounds.

\begin{table}[H]
\centering
\caption{P-values of skewness and kurtosis tests for different datasets}
\vspace{6mm}
\renewcommand{\arraystretch}{1.2}
\begin{tabular}{lcccc}
\hline
\textbf{Dataset}        & \textbf{Mnist} & \textbf{Fashion} &\textbf{Cifar}   & \textbf{TinyImagenet}\\
\hline
Skewness p\_value & 1.00          & 1.00            & 1.00                    & 1.00          \\
Kurtosis p\_value & 0.93          & 0.95            & 0.84                    & 0.91          \\
\hline
\end{tabular}
\end{table}

\begin{table}[H]\footnotesize
\renewcommand{\arraystretch}{1.2}
        
	\centering
	\caption{Performance comparisons for stream unlearning on a specific class(avg\%$\pm$std\%). SAFE can always achieve the highest average rank in Fashion, CIFAR10, and TinyImagenet. SAFE can achieve \textbf{8 best results} and \textbf{6 second best results} out of all 16 evaluations on different datasets.}
        \vspace{4mm}
        {\fontsize{6.5}{7.5}\selectfont
	\begin{tabular}{p{1.0cm}<{\centering} | p{1.8cm}<{\centering} p{1.8cm}<{\centering} p{1.8cm}<{\centering} p{1.8cm}<{\centering} p{0.6cm}<{\centering} }
		\toprule
            \multirow{2}{*}{\textbf{Method}} &\textbf{RA} &\textbf{FA} &\textbf{TA(R)}  &\textbf{TA(F)}&\textbf{R}\\
            
            ~ &\multicolumn{4}{c}{\textbf{MNIST}} \\ 
            
            \midrule
            \textbf{Retrain}& 99.68$\pm$0.04& 93.34$\pm$21.54& 98.98$\pm$0.21& 93.70$\pm$21.61& -\\
            
            \textbf{Unroll}& 95.08$\pm$1.79& 18.25$\pm$28.72& \underline{97.78$\pm$0.69}& 18.11$\pm$29.06&3.75\\

            \textbf{CaMU}& \textbf{97.04$\pm$0.91}& \textbf{43.72$\pm$30.38}& \textbf{98.55$\pm$0.16}& \textbf{44.13$\pm$30.45}&\textbf{1}\\

            \rowcolor{gray!20}
            \textbf{Lcode}& 32.08$\pm$5.23& 31.30$\pm$9.70& 32.30$\pm$4.77& 30.83$\pm$9.77&4\\
            
            \rowcolor{gray!20}
            \textbf{Desc}& 34.51$\pm$19.40& 30.63$\pm$22.03& 34.84$\pm$19.45& 30.34$\pm$21.99&4\\

            \rowcolor{gray!40}
            \textbf{SAFE}& \underline{96.50$\pm$0.51}& \underline{39.19$\pm$0.24}& 95.96$\pm$0.46& \underline{30.95$\pm$0.10}&\underline{2.25}\\\midrule

            ~ &\multicolumn{4}{c}{\textbf{Fashion}} \\ 
            
            \midrule
            \textbf{Retrain}& 96.68$\pm$0.39& 71.38$\pm$18.54& 91.61$\pm$0.35& 70.75$\pm$18.58&-\\
            
            \textbf{Unroll}& 84.41$\pm$2.39& \underline{36.84$\pm$4.99}& 86.79$\pm$1.94& \underline{32.30$\pm$4.29}&\underline{2.5}\\

            \textbf{CaMU}& \underline{89.33$\pm$0.10}& 25.41$\pm$2.50& \underline{90.02$\pm$0.45}& 23.56$\pm$2.34&\underline{2.5}\\

            \rowcolor{gray!20}
            \textbf{Lcode}& 22.76$\pm$2.19& 19.27$\pm$5.87& 22.66$\pm$1.94& 18.62$\pm$5.53&5\\
            
            \rowcolor{gray!20}
            \textbf{Desc}& 30.12$\pm$11.09& 19.42$\pm$15.74& 30.45$\pm$11.00& 18.76$\pm$15.10&4\\

            \rowcolor{gray!40}
            \textbf{SAFE}& \textbf{93.36$\pm$0.02}& \textbf{78.62$\pm$0.01}& \textbf{93.02$\pm$0.01}& \textbf{63.37$\pm$0.01}&\textbf{1}\\\midrule

            ~ &\multicolumn{4}{c}{\textbf{CIFAR10}}\\ 
            
            \midrule
            \textbf{Retrain}& 93.80$\pm$0.20& 73.06$\pm$20.15& 86.23$\pm$0.18& 73.35$\pm$20.27&-\\

            \textbf{Unroll}& 85.73$\pm$2.88& 13.45$\pm$16.93& \textbf{84.07$\pm$2.46}& 13.33$\pm$15.97&3.5\\

            \textbf{CaMU}& \underline{92.84$\pm$1.52}& 52.23$\pm$16.88& 89.11$\pm$0.78& 45.94$\pm$14.44&2.75\\

            \rowcolor{gray!20}
            \textbf{Lcode}& 65.80$\pm$12.43& 21.21$\pm$3.94& 65.47$\pm$12.59& 18.23$\pm$3.41&4.5\\
            
            \rowcolor{gray!20}
            \textbf{Desc}& 84.94$\pm$10.89& \textbf{69.55$\pm$29.53}& 80.79$\pm$27.55&\textbf{65.47$\pm$9.41}&\underline{2.5}\\

            \rowcolor{gray!40}
            \textbf{SAFE}& \textbf{94.30$\pm$0.15}& \underline{59.20$\pm$0.07}& \underline{88.75$\pm$0.13}& \underline{60.10$\pm$0.03}&\textbf{1.75} \\\midrule

            ~ &\multicolumn{4}{c}{\textbf{TinyImagenet}} \\ 
            
            \midrule
            \textbf{Retrain}& 65.10$\pm$0.08& 37.28$\pm$31.21& 42.91$\pm$0.08& 40.04$\pm$33.03&-\\

            \textbf{Unroll}& 36.25$\pm$1.15& 2.00$\pm$0.04& \underline{27.88$\pm$0.75}& 4.12$\pm$0.08&3.25\\

            \textbf{CaMU}& \underline{37.46$\pm$0.98}& 9.08$\pm$4.73& \underline{27.39$\pm$1.05}& 11.42$\pm$3.69&\underline{2.5}\\

            \rowcolor{gray!20}
            \textbf{Lcode}& 27.90$\pm$16.89& 0.58$\pm$0.13& 26.57$\pm$17.45& \underline{0.53$\pm$0.98}&4.5\\
            
            \rowcolor{gray!20}
            \textbf{Desc}& 13.97$\pm$17.02& \underline{9.11$\pm$9.71}& 14.37$\pm$16.87& 7.31$\pm$7.32&3.75\\

            \rowcolor{gray!40}
            \textbf{SAFE}& \textbf{52.88$\pm$0.11}& \textbf{15.41$\pm$0.14}& \textbf{36.40$\pm$0.06}& \textbf{17.21$\pm$0.01}&\textbf{1}\\

            \bottomrule
            
	\end{tabular}\label{streamforclass}
        \vspace{-3mm}
        }
\end{table}

\subsection{Results of Streaming Unlearning in Specific Class}

To further evaluate the effectiveness of the SAFE algorithm on single-class data unlearning tasks, we designed experiments that combine stream unlearning in the specific class. In each round, the unlearning request involves removing a subset of data belonging to the same class, continuing until the entire class is removed. We refer to this process as \textit{stream-for-class unlearning}. Experiments were conducted on the MNIST, Fashion, CIFAR10, and TinyImagenet datasets to assess the feasibility of stream-for-class unlearning. In MNIST, Fashion and CIFAR10, we conduct $20$ rounds of unlearning. In each round we remove $300$ data points from class $0$ in MNIST and Fashion and $250$ data points in CIFAR10. In TinyImagenet, $5$ rounds of unlearning are conduct and $80$ data points are removed in each round. By the last round, all data from the target class had been removed, transitioning the task to full-class unlearning.

In these experiments, we primarily evaluate four metrics: \textbf{RA} (training accuracy on the remaining classes), \textbf{UA} (training accuracy on the forgotten class), \textbf{TA(R)} (test accuracy on the remaining classes), and \textbf{TA(U)} (test accuracy on the forgetting class). Similar to our previous experiments, we performed 20 rounds of unlearning, where, by the 20th round, all data from the target class is removed, transitioning the task to full-class unlearning.

Among the five baseline methods, Retrain, Unroll, and CaMU perform single-batch unlearning on all accumulated data for each request, while Lcode, Desc, and SAFE apply stream unlearning for each request. Table~\ref{streamforclass} shows the average results after completing the final round of unlearning. On more complex datasets like, Fashion, CIFAR10, and TinyImagenet, SAFE achieves the best average performance across the four metrics, demonstrating the most balanced trade-off between target class unlearning and preserving knowledge of other classes. In addition, SAFE can achieve $9$ out $12$ best results and $3$ second best ones in such three datasets. This demonstrate the superior performance of SAFE on the unlearning of complex datasets. On the MNIST dataset, CaMU achieves the best results while SAFE can still achieve the second best one. This can be due to the usage of remaining data on CaMU while SAFE does not use any remaining data during the unlearning phases.

\begin{figure*}[htbp]
       \vspace{-4mm}
	\centering
	\hspace{-4mm}
        \subfigure[Remaining data accuracy]{
		\begin{minipage}[t]{0.25\linewidth}
			\centering
			\includegraphics[width=\linewidth]{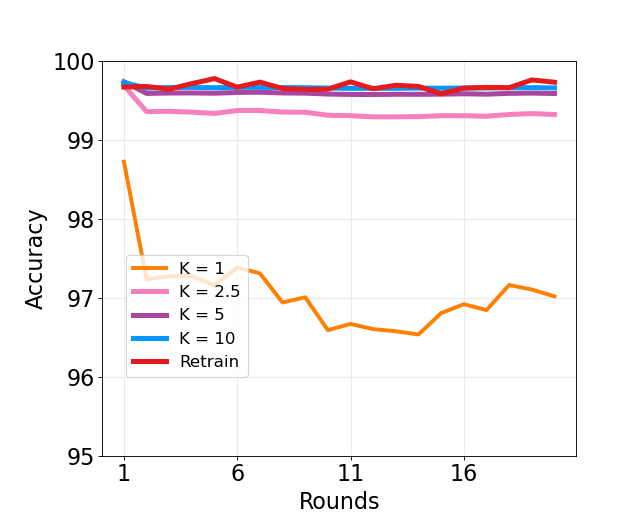}
            \label{rounda_mnist}
		\end{minipage}
	}
        \hspace{2mm}
        \subfigure[Forgetting data accuracy]{
		\begin{minipage}[t]{0.25\linewidth}
			\centering
			\includegraphics[width=\linewidth]{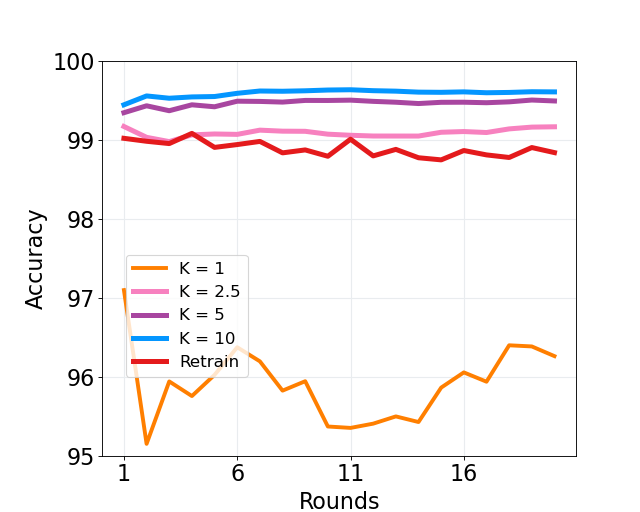}
            \label{roundb_mnist}
		\end{minipage}
	}
        \hspace{2mm}
        \subfigure[Test data accuracy]{
		\begin{minipage}[t]{0.25\linewidth}
			\centering
			\includegraphics[width=\linewidth]{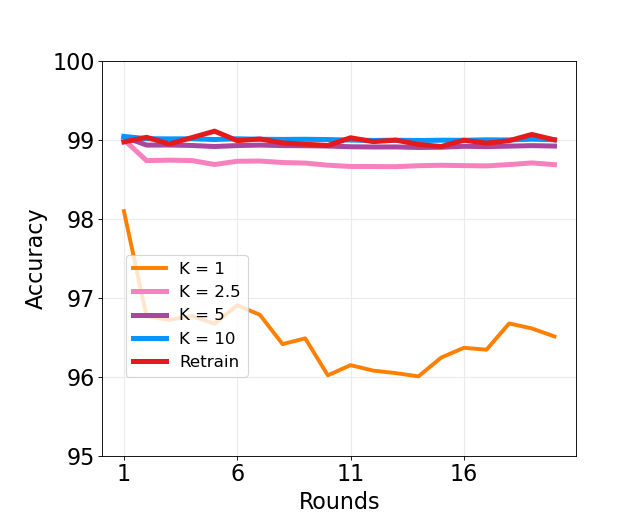}
            \label{roundc_mnist}
		\end{minipage}
	}
        
	\caption{Model performance against unlearning rounds on MNIST.}
	\label{results_rounds_lr_mnist}
\end{figure*}

\begin{figure*}[htbp]
	\centering
	\hspace{-4mm}
        \subfigure[Remaining data accuracy]{
		\begin{minipage}[t]{0.25\linewidth}
			\centering
			\includegraphics[width=\linewidth]{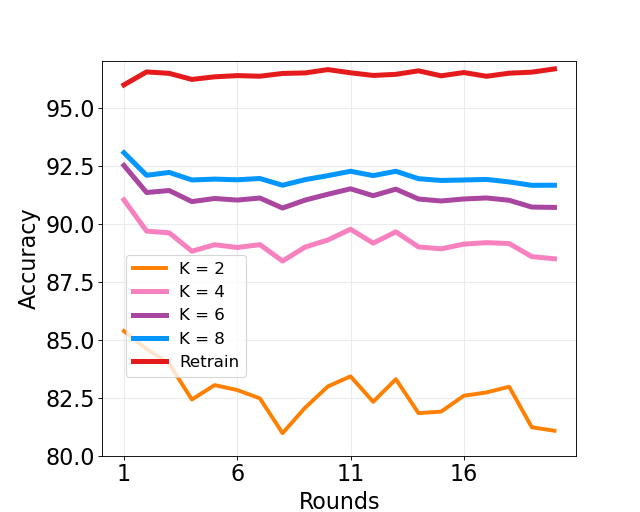}
            \label{rounda_famnist}
		\end{minipage}
	}
        \hspace{2mm}
        \subfigure[Forgetting data accuracy]{
		\begin{minipage}[t]{0.25\linewidth}
			\centering
			\includegraphics[width=\linewidth]{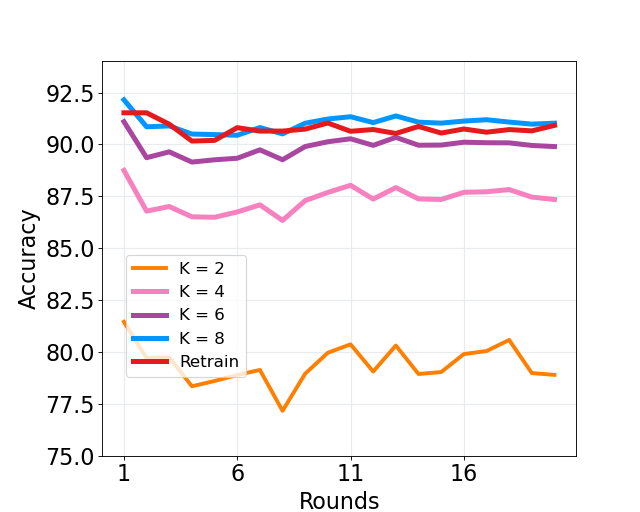}
            \label{roundb_fa}
		\end{minipage}
	}
        \hspace{2mm}
        \subfigure[Test data accuracy]{
		\begin{minipage}[t]{0.25\linewidth}
			\centering
			\includegraphics[width=\linewidth]{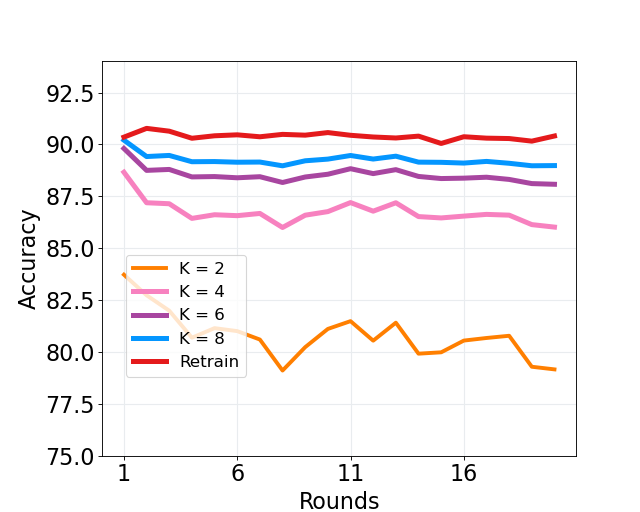}
            \label{roundc_fa}
		\end{minipage}
	}
        
	\caption{Model performance against unlearning rounds on Fashion.}
	\label{results_rounds_lr_fashion}
    \vspace{-4mm}
\end{figure*}

\begin{figure*}[htbp]
	\centering
	\hspace{-4mm}
        \subfigure[Remaining data accuracy]{
		\begin{minipage}[t]{0.25\linewidth}
			\centering
			\includegraphics[width=\linewidth]{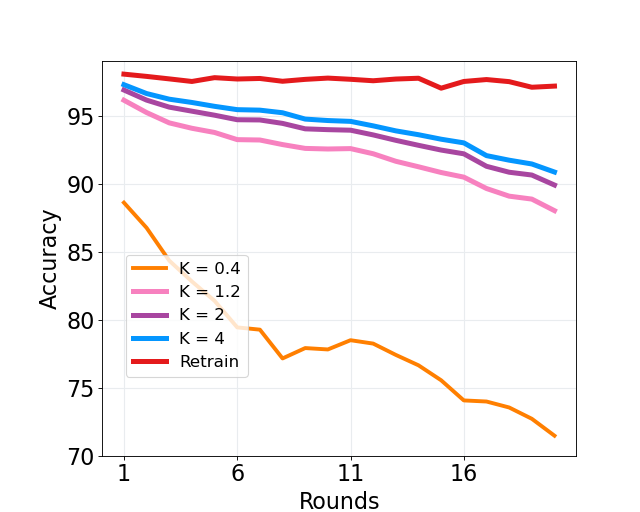}
            \label{rounda_c}
		\end{minipage}
	}
        \hspace{2mm}
        \subfigure[Forgetting data accuracy]{
		\begin{minipage}[t]{0.25\linewidth}
			\centering
			\includegraphics[width=\linewidth]{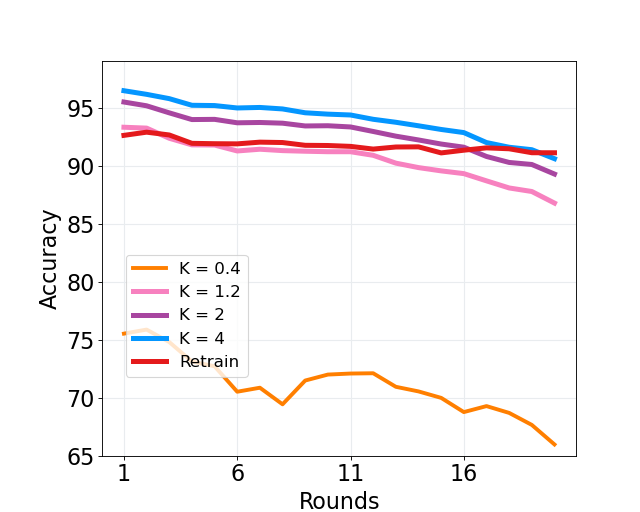}
            \label{roundb_c}
		\end{minipage}
	}
        \hspace{2mm}
        \subfigure[Test data accuracy]{
		\begin{minipage}[t]{0.25\linewidth}
			\centering
			\includegraphics[width=\linewidth]{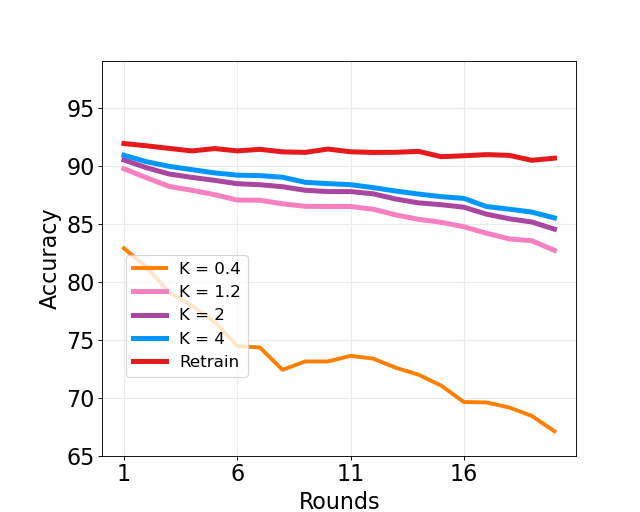}
            \label{roundc_c}
		\end{minipage}
	}
        
	\caption{Model performance against unlearning rounds on CIFAR10.}
	\label{results_rounds_lr_cifar}
    \vspace{-4mm}
\end{figure*}

\begin{figure*}[htbp]
	\centering
	\hspace{-4mm}
        \subfigure[Remaining data accuracy]{
		\begin{minipage}[t]{0.25\linewidth}
			\centering
			\includegraphics[width=\linewidth]{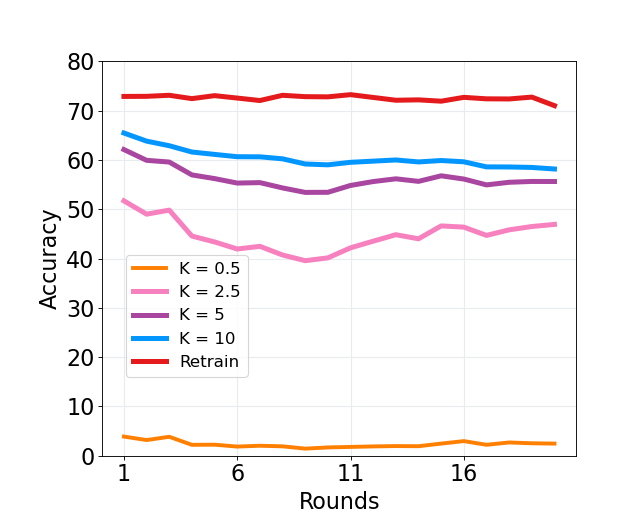}
            \label{rounda_t}
		\end{minipage}
	}
        \hspace{2mm}
        \subfigure[Forgetting data accuracy]{
		\begin{minipage}[t]{0.25\linewidth}
			\centering
			\includegraphics[width=\linewidth]{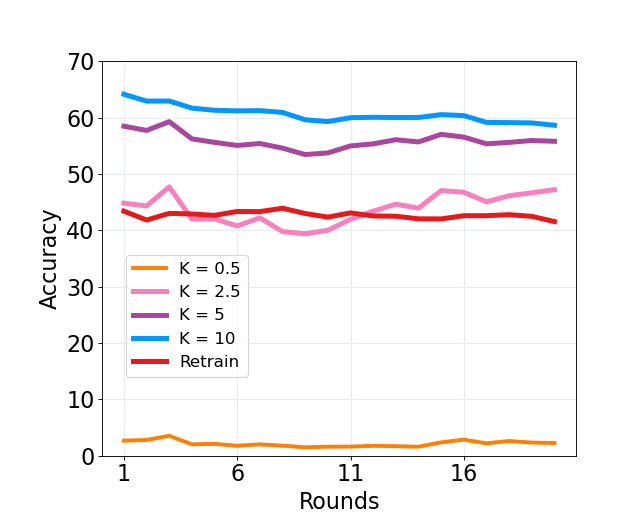}
            \label{roundb_t}
		\end{minipage}
	}
        \hspace{2mm}
        \subfigure[Test data accuracy]{
		\begin{minipage}[t]{0.25\linewidth}
			\centering
			\includegraphics[width=\linewidth]{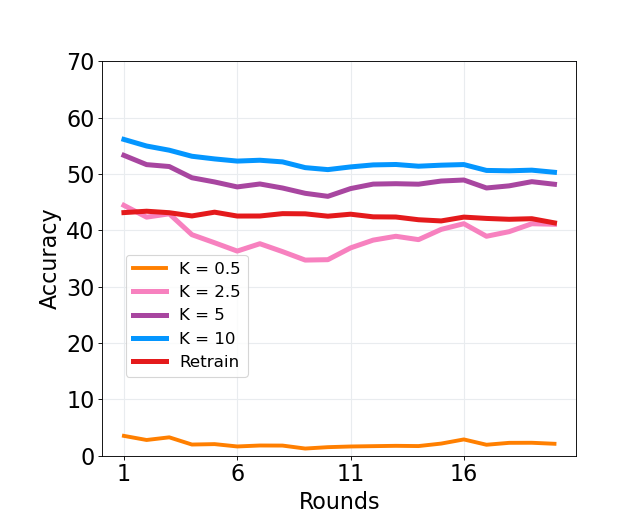}
            \label{roundc_t}
		\end{minipage}
	}
        
	\caption{Model performance against unlearning rounds on TinyImagenet.}
	\label{results_rounds_lr_tiny}
        \vspace{-4mm}
\end{figure*}

\end{document}